\def\eqref#1{equation~\ref{#1}}
\def\1{\bm{1}}
\DeclareMathAlphabet{\mathsfit}{\encodingdefault}{\sfdefault}{m}{sl}
\SetMathAlphabet{\mathsfit}{bold}{\encodingdefault}{\sfdefault}{bx}{n}
\renewcommand{\eqref}[1]{(\ref{#1})}
\newtheorem{theorem}{Theorem}[section]
\newtheorem{proposition}[theorem]{Proposition}
\newtheorem{definition}[theorem]{Definition}
\newtheorem{corollary}[theorem]{Corollary}
\newtheorem{remark}[theorem]{Remark}
\newtheorem{lemma}[theorem]{Lemma}
\definecolor{akcolor}{rgb}{0.65, 0.15, 0.6}
\definecolor{darkblue}{rgb}{0.1,0.1,0.6}
\def\cD{\mathcal{D}}
\def\cP{\mathcal{P}}
\def\cU{\mathcal{U}}
\def\bbR{\mathbb{R}}
\def\bbS{\mathbb{S}}
\def\weakstarto{\stackrel{*}{\rightharpoonup}}
\def\Unif{\mathrm{Unif}}
\def\TV{\mathrm{TV}}
\def\supp{\mathrm{supp}}
\title{Expected Sliced Transport Plans}
\author{%
  Xinran Liu$^{1}$, Rocío Martín Díaz$^{2}$, Yikun Bai$^{1}$, Ashkan Shahbazi$^{1}$, \\ \textbf{Matthew Thorpe}$^{3}$, \textbf{Akram Aldroubi}$^{4}$, \textbf{Soheil Kolouri}$^{1}$  \\ 
  \\
  $^{1}$Department of Computer Science, Vanderbilt University, Nashville, TN, 3723\\
  $^{2}$Department of Mathematics, Tufts University, Medford, MA 02155\\
  $^{3}$Department of Statistics, University of Warwick, Coventry, CV4 7AL, UK\\
  $^{4}$Department of Mathematics, Vanderbilt University, Nashville, TN, 3723\\
}
\begin{document}

\date{}
\maketitle

\begin{abstract}

The optimal transport (OT) problem has gained significant traction in modern machine learning for its ability to: (1) provide versatile metrics, such as Wasserstein distances and their variants, and (2) determine optimal couplings between probability measures. To reduce the computational complexity of OT solvers, methods like entropic regularization and sliced optimal transport have been proposed. The sliced OT framework improves efficiency by comparing one-dimensional projections (slices) of high-dimensional distributions. However, despite their computational efficiency, sliced-Wasserstein approaches lack a transportation plan between the input measures, limiting their use in scenarios requiring explicit coupling. In this paper, we address two key questions: Can a transportation plan be constructed between two probability measures using the sliced transport framework? If so, can this plan be used to define a metric between the measures? We propose a "lifting" operation to extend one-dimensional optimal transport plans back to the original space of the measures. By computing the expectation of these lifted plans, we derive a new transportation plan, termed expected sliced transport (EST) plans. We prove that using the EST plan to weight the sum of the individual Euclidean costs for moving from one point to another results in a valid metric between the input discrete probability measures. We demonstrate the connection between our approach and the recently proposed min-SWGG, along with illustrative numerical examples that support our theoretical findings.

\end{abstract}

\section{Introduction}

The optimal transport (OT) problem \citep{Villani2009Optimal} seeks the most efficient way to transport a distribution of mass from one configuration to another, minimizing the cost associated with the transportation process. It has found diverse applications in machine learning due to its ability to provide meaningful distances, i.e., the Wasserstein distances \citep{peyre2019computational}, between probability distributions, with applications ranging from supervised learning \citep{frogner2015learning} to generative modeling \citep{arjovsky2017wasserstein}. Beyond merely measuring distances between probability measures, the optimal transportation plan obtained from the OT problem provides correspondences between the empirical samples of the source and target distributions, which are used in various applications, including domain adaptation \citep{courty2014domain}, positive-unlabeled learning \citep{chapel2020partial}, texture mixing \citep{rabin2011wasserstein}, color transfer \citep{rabin2014adaptive}, image analysis \citep{basu2014detecting}, and even single-cell and spatial omics \citep{bunne2024optimal}, to name a few.

One of the primary challenges in applying the OT framework to large-scale problems is its computational complexity. Traditional OT solvers for discrete measures typically scale cubically with the number of samples (i.e., the support size) \citep{Kolouri2017Optimal}. This computational burden has spurred significant research efforts to accelerate OT computations. Various approaches have been developed to address this challenge, including entropic regularization \citep{cuturi2013sinkhorn}, multiscale methods \citep{schmitzer2016sparse}, and projection-based techniques such as sliced-Wasserstein distances \citep{rabin2011wasserstein} and robust subspace OT \citep{paty2019subspace}. Each of these methods has its own advantages and limitations.

For instance, the entropic regularized OT is solved via an iterative algorithm (i.e., the Sinkhorn algorithm) with quadratic computational complexity per iteration. However, the number of iterations required for convergence typically increases as the regularization parameter decreases, which can offset the computational benefits of these methods. Additionally, while entropic regularization interpolates between Maximum-Mean Discrepancy (MMD) \citep{gretton2012kernel} and the Wasserstein distance \citep{feydy2019interpolating}, it does not produce a true metric between probability measures. Despite not providing a metric, the entropic OT provides a transportation plan, i.e., soft correspondences, albeit not the optimal one. On the other hand, sliced-Wasserstein distances offer linearithmic computational complexity, enabling the comparison of discrete measures with millions of samples. These distances are also topologically equivalent to the Wasserstein distance and offer statistical advantages, such as better sample complexity \citep{nadjahi2020statistical}. However, despite their computational efficiency, the sliced-Wasserstein approaches do not provide a transportation plan between the input probability measures, limiting their applicability to problems that require explicit coupling between measures.

In this paper, we address two central questions: First, \textbf{can a transportation plan be constructed between two probability measures using the sliced transport framework?} If so, \textbf{can the resulting transportation plan be used to define a metric between the two probability measures?} Within the sliced transport framework, the "slices" refer to the one-dimensional marginals of the source and target probability measures, for which an optimal transportation plan is computed. Crucially, this optimal transportation plan applies to the marginals (i.e., one-dimensional probability measures) rather than the original measures. To derive a transportation plan between the source and target measures, this optimal plan for the marginals must be "lifted" back to the original space.

For discrete measures with equal support size, $N$, and uniform mass, $1/N$, the optimal transportation plan between marginals is represented by a correspondence matrix, specifically an $N \times N$ permutation matrix. Previous works have used directly the correspondence matrix obtained for a slice as a transportation plan in the original space of measures \citep{rowland2019orthogonal, mahey2023fast}. This paper provides a holistic and rigorous analysis of this problem for general discrete probability measures. 

Our specific contributions in this paper are: 
\begin{enumerate}
    \item Introducing a computationally efficient transportation plan between discrete probability measures, the Expected Sliced Transport plan.  
    \item  Providing a distance for discrete probability measures, the Expected Sliced Transport (EST) distance. 
    \item Offering both a theoretical proof and an experimental visualization showing that the EST distance is equivalent to the Wasserstein distance (and to weak$^*$ convergence) when applied to discrete measures.
    \item Demonstrating the performance of the proposed distance and the transportation plan in diverse applications, namely interpolation and classification. 
\end{enumerate}





\section{Expected Sliced Transport}

\subsection{Preliminaries}
Given a probability measure $\mu\in \mathcal{P}(\mathbb{R}^d)$  and a unit vector  $\theta\in \mathbb{S}^{d-1}\subset \mathbb{R}^d$, we define  $\theta_\#\mu:=\langle \theta,\cdot\rangle_\#\mu$ to be the \textit{$\theta$-slice} of the measure $\mu$, where $\langle \theta,x\rangle=\theta\cdot x=\theta^T x$ denotes the standard inner product in $\mathbb{R}^d$.  For any pair of probability measures with finite $p$-moment ($p> 1$) $\mu^1,\mu^2\in \mathcal{P}_p(\mathbb{R}^d)$, one can pose the following two Optimal Transport (OT) problems:
On the one hand, consider the classical OT problem, which gives rise to the $p$-Wasserstein metric:
\begin{align}
W_p(\mu^1,\mu^2):=\min_{\gamma\in\Gamma(\mu^1,\mu^2)}\left(\int_{\mathbb R^d\times \mathbb R^d}\|x-y\|^pd\gamma(x,y)\right)^{1/p}\label{eq:OT_usual}
\end{align}
where $\|\cdot\|$ denotes the Euclidean norm in $\mathbb R^d$ and  $\Gamma(\mu^1,\mu^2)\subset \mathcal{P}(\mathbb R^d\times \mathbb R^d)$ is the subset of all probability measures with marginals $\mu^1$ and $\mu^2$. 
On the other hand, for a given $\theta\in\mathbb{S}^{d-1}$, consider
the one-dimensional OT problem: 
\begin{align}
W_p(\theta_\#\mu^1,\theta_\#\mu^2)=
\min_{\Lambda_\theta\in\Gamma(\theta_\#\mu^1,\theta_\#\mu^2)}\left(\int_{\mathbb R\times \mathbb R}|u-v|^p d\Lambda_\theta(u,v)\right)^{1/p}\label{eq:OT_theta}
\end{align}
In this case, 
since the measures $\theta_\#\mu^1,\theta_\#\mu^2$ can be regarded as one-dimensional probabilities in $\mathcal{P}(\mathbb R)$, there exists a unique optimal transport plan, which we denote by $\Lambda_\theta^{\mu_1,\mu_2}$ (see, for e.g.,  \cite[Thm. 2.18, Remark 2.19]{villani2021topics}], \cite[Thm. 16.1]{maggi2023optimal}).


\subsection{On slicing and lifting transport plans}

In this section, given discrete probability measures $\mu^1,\mu^2\in \mathcal{P}(\mathbb R^d)$, we describe the process of slicing them according to a direction $\theta\in\mathbb S^{d-1}$ and lifting  the optimal transportation plan $\Lambda_\theta^{\mu^1,\mu^2}$, which solves the 1-dimensional OT problem \eqref{eq:OT_theta}, to get a plan in $\Gamma(\mu^1,\mu^2)$. Thus, we obtain a new measure, denoted as $\gamma_\theta^{\mu^1,\mu^2}$, in $\mathcal{P}(\mathbb R^d\times \mathbb R^d)$ with first and second marginal $\mu^1$ and $\mu^2$, respectively. For clarity, we first describe the process for discrete uniform measures and then extend it to any pair of discrete measures.  

\subsubsection{On slicing and lifting transport plans for \textbf{uniform discrete measures}}

Given $N\in \mathbb{N}$, consider the space $\mathcal{P}_{(N)}(\mathbb{R}^d)$ of uniform discrete probability measures concentrated at $N$ particles in $\mathbb{R}^d$, that is,
$$\mathcal{P}_{(N)}(\mathbb{R}^d)=\left\{\frac{1}{N}\sum_{i=1}^N \delta_{x_i}~|~x_i\in\mathbb{R}^d,~\forall i\in\{1,...,N\}\right\}.$$
Let $\mu^1=\frac{1}{N}\sum_{i=1}^N \delta_{x_i},\mu^2=\frac{1}{N}\sum_{j=1}^N \delta_{y_j}\in \mathcal{P}_{(N)}(\mathbb{R}^d)$, where $x_i,y_j\in \mathbb{R}^d$ and $\delta_{x_i}$ denotes a Dirac measure located at $x_i$ (respectively for $\delta_{y_j}$). Let us denote by $\mathcal{U}(\mathbb{S}^{d-1})$ the uniform measure on the hypersphere $\mathbb{S}^{d-1}\subset \mathbb{R}^d$. In this case, the $\theta$-slice of $\mu^1$ is represented by $\theta_\#\mu^1=\frac{1}{N}\sum_{i=1}^N \delta_{\theta\cdot x_i}$, 
and similarly for $\theta_\# \mu^2$. Let $\mathbf S_N$ denote the symmetric group of all permutations of the elements in the set $[N]:= \{1,\dots,N\}$. Let $\zeta_{\theta},\tau_{\theta}\in \mathbf S_N$ denote the sorted indices of the projected points $\{\theta\cdot x_i\}_{i=1}^N$ and  $\{\theta\cdot y_j\}_{j=1}^N$, respectively, that is, 
\begin{equation}\label{eq: order}
\theta\cdot x_{\zeta_{\theta}^{-1}(1)}\leq \theta\cdot x_{\zeta_{\theta}^{-1}(2)}\leq \dots\leq \theta\cdot x_{\zeta_{\theta}^{-1}(N)}  \text{ and } \theta\cdot y_{\tau_\theta^{-1}(1)}\leq \theta\cdot y_{\tau_\theta^{-1}(2)}\leq \dots\leq \theta\cdot y_{\tau_\theta^{-1}(N)}    
\end{equation}
The optimal matching from $\theta_\#\mu^1$ to $\theta_\#\mu^2$ for the problem \eqref{eq:OT_theta} is induced by the assignment 
 \begin{equation}\label{eq: 1d asignment}
     \theta\cdot x_{\zeta_\theta^{-1}(i)}\longmapsto \theta\cdot y_{\tau_\theta^{-1}(i)}, \qquad \forall 1\leq i\leq N. 
 \end{equation}
We define $T_\theta^{\mu^1,\mu^2}:\{x_1,\dots, x_N\}\to\{y_1,\dots, y_N\}$ the \textit{lifted transport map} between $\mu^1$ and $\mu^2$ by: 
\begin{equation}\label{eq: unifom case map}
    T_\theta^{\mu^1,\mu^2} (x_i)= y_{\tau^{-1}_\theta(\zeta_\theta(i))}, \qquad \forall 1\leq i\leq N.
\end{equation}
Rigorously,  $T_{\theta}^{\mu^1,\mu^2}$ is not necessarily a function defined on $\{x_1,\dots, x_N\}$ but on the the labels $\{1,\dots, N\}$, as two projected points $\theta\cdot x_i$ and $\theta \cdot x_j$ could coincide for $i\not = j$. 
As a result, it is more convenient to work with \textit{lifted transport plans}. Indeed, the matrix $u_\theta^{\mu^1,\mu^2}\in \mathbb{R}^{n\times n}$ given by
\begin{equation}\label{eq: gamma theta for uniform}
u_\theta^{\mu^1,\mu^2}(i,j)=
\begin{cases}
     1/N & \text{if } j=\tau^{-1}_\theta(\zeta_\theta(i)) \\
     0 & \text{otherwise} 
\end{cases}
\end{equation}
encodes the weights of optimal transport plan between $\theta_\#\mu^1$ and $\theta_\#\mu^2$ given by
\begin{equation}\label{eq: lambda for unif}
  \Lambda_\theta^{\mu^1,\mu^2}:=\sum_{i,j}u_\theta^{\mu^1,\mu^2}(i,j)\delta_{(\theta\cdot x_i, \theta\cdot y_j)},  
\end{equation}
 as well as the weights of the \textit{lifted transport plan} between the original measures $\mu^1$ and $\mu^2$ according to the $\theta$-slice defined by
 \begin{equation}\label{eq: gamma for unif}
     \gamma_\theta^{\mu^1,\mu^2}:=\sum_{i,j}u_\theta^{\mu^1,\mu^2}(i,j)\delta_{( x_i, y_j)}.
 \end{equation}
This new measure $\gamma_\theta^{\mu^1,\mu^2}\in \mathcal{P}(\mathbb R^d  \times \mathbb R^d)$  has marginals $\mu^1$ and $\mu^2$. 
While $\gamma_\theta^{\mu^1,\mu^2}$ is not necessarily optimal for the OT problem \eqref{eq:OT_usual} between $\mu^1$ and $\mu^2$, it can be interpreted as a transport plan in $\Gamma(\mu^1,\mu^2)$ which is optimal when projecting $\mu^1$ and $\mu^2$ in the direction of $\theta$. See Figure \ref{fig: sliced_transport_1d} (a)
for a visualization.

\begin{figure}[t!]
    \centering
    \includegraphics[width=\linewidth]{./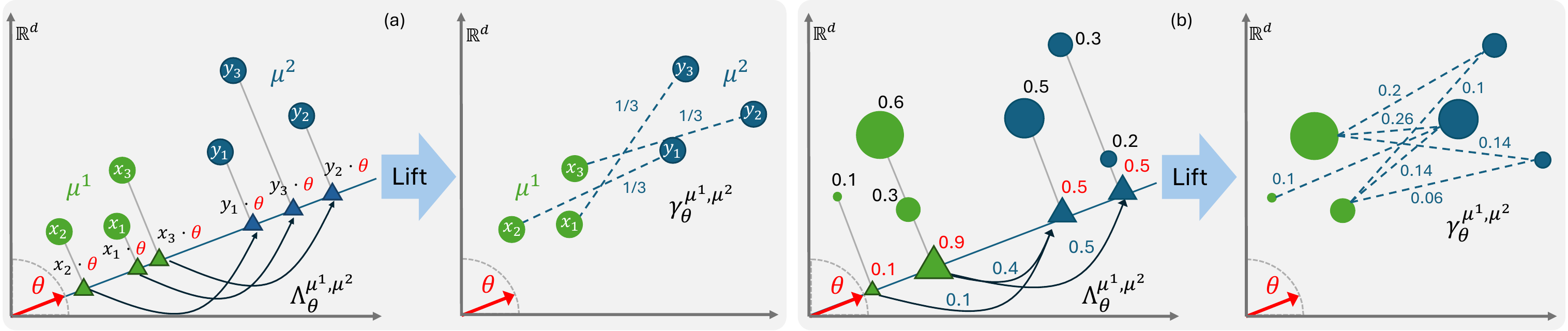}
    \caption{Visualization of the 1-dimensional plan $\Lambda_\theta^{\mu^1,\mu^2}$ (given an unit vector $\theta$) and the corresponding lifted transport plan $\gamma_\theta^{\mu^1,\mu^2}$ between discrete probability measures $\mu^1$ (green circles) and $\mu^2$ (blue circles). In (a) the measures $\mu^1,\mu^2$ are uniform and the masses do not overlap when projecting in the direction of $\theta$. In (b) the measures $\mu^1,\mu^2$ are not uniform and some of the masses overlap when projecting in the direction of $\theta$. For more details see Remark \ref{remark-caption} in Appendix \ref{app: metric prop prelim}.}
    \label{fig: sliced_transport_1d}
\end{figure}

\subsubsection{On slicing and lifting transport plans for \textbf{general discrete measures}}\label{subsec: discrete measures}

Consider discrete measures $\mu^1,\mu^2\in\mathcal{P}(\mathbb{R}^d)$. In this section, we will use the notation $\mu^1=\sum_{x\in\mathbb{R}^d}p(x)\delta_x$,
where $p(x)\geq 0$ for all $x\in \mathbb{R}^d$, $p(x)\not= 0$ for at most countable many points $x\in\mathbb{R}^d$, and $\sum_{x\in\mathbb{R}^d}p(x)=1$. Similarly, $\mu^2=\sum_{y\in\mathbb{R}^d}q(y)\delta_y$ for a  non-negative density function $q$ in $\mathbb R^d$ with finite or countable support and such that $\sum_{y\in\mathbb{R}^d}q(y)=1$. Given $\theta\in\mathbb{S}^{d-1}$, consider the equivalence relation defined by: $$x\sim_\theta x' \quad  \text{ if and only if } \quad \theta\cdot x=\theta\cdot x'$$ 
We denote by $\bar x^\theta $ the equivalence class of $x\in \mathbb{R}^d$. By abuse of notation, we will use interchangeably that $\bar x^\theta$ is a point in the quotient space $\mathbb{R}^d/{\sim_\theta}$, and also the set $\{x'\in\mathbb{R}^d:\ \theta\cdot x= \theta\cdot x'\}$, which is the orthogonal hyperplane to $\theta$ that intersects $x$. The intended meaning of $\bar x^\theta$ will be clear from the context. Notice that, geometrically, the quotient space $\mathbb{R}^d/{\sim_\theta}$ is the line $\mathbb R$ in the direction of $\theta$.

Now, we interpret the projected measures $\theta_\#\mu^1$, $\theta_\#\mu^2$ as 1-dimensional probability measures in $\mathcal{P}(\mathbb{R}^d/{\sim_\theta})$ given by  $\theta_\#\mu^1=\sum_{\bar x^\theta\in \mathbb{R}^d/{\sim_\theta}}P(\bar x^\theta)\delta_{\bar x^\theta}$, where 
$P(\bar x^\theta)=\sum_{x'\in\bar x^\theta}p(x')$, and similarly, $\theta_\#\mu^2=\sum_{\bar y^\theta\in \mathbb{R}^d/{\sim_\theta}}Q(\bar y^\theta)\delta_{\bar y^\theta}$, where 
$Q(\bar y^\theta)=\sum_{y'\in\bar y^\theta}q(y')$.

\begin{remark}
Notice that if $P(\bar x^\theta)=0$, then $p(x')=0$ for all $x'\in \bar x^\theta$, or, equivalently, if $p(x)\not=0$, then $P(\bar x^\theta)\not=0$ (where $x$ is any `representative' of the class $\bar{x}^\theta$). Similarly for $Q$. 
\end{remark}

Consider the optimal transport plan $\Lambda_\theta^{\mu^1,\mu^2}\in \Gamma(\theta_\#\mu^1,\theta_\#\mu^2)\subset \mathcal{P}(\mathbb{R}^d/{\sim_\theta}\times \mathbb{R}^d/{\sim_\theta})$ between 
$\theta_\#\mu^1$ and $\theta_\#\mu^2$, which is \textit{unique} for the OT problem \eqref{eq:OT_theta} as we are considering 1-dimensional probability measures. Let us define
\begin{equation*}
    u_\theta^{\mu^1,\mu^2}(x,y):=\begin{cases}
        \frac{p(x)q(y)}{P(\bar x^\theta)Q(\bar y^\theta)}\Lambda_\theta^{\mu^1,\mu^2}(\{(\bar x^\theta,\bar y^\theta)\}) & \text{ if } p(x)\not = 0 \text{ and } q(y)\not=0\\
        0 & \text{ if } p(x) = 0 \text{ or } q(y)=0
    \end{cases}
\end{equation*}
which allows us to generalize the \textit{lifted transport plan} given in \eqref{eq: gamma theta for uniform} in the general discrete case:
\begin{equation}\label{eq: gamma theta for general discrete}    \gamma_\theta^{\mu^1,\mu^2}:=\sum_{x\in\mathbb{R}^d}\sum_{y\in\mathbb{R}^d}u_\theta^{ \mu^1,\mu^2}(x,y)\delta_{(x,y)}
\end{equation}
See Figure \ref{fig: sliced_transport_1d} (b) for a visualization. 

\begin{remark}
 Notice that this \textit{lifting} process can be performed by starting with any transport plan $\Lambda_\theta\in \Gamma(\theta_\#\mu^1,\theta_\#\mu^2)$, but in this article we will always consider the optimal transportation plan, i.e.,  $\Lambda_\theta=\Lambda_\theta^{\mu^1,\mu^2}$. The reason why we make this choice if because this will give rise to a metric between discrete probability measures: The EST distance which will be defined in Section~\ref{subsec:EST:ESTDis}. 
\end{remark}

\begin{lemma}\label{lem: gamma_theta discrete}
    Given general discrete probability measures $\mu^1$ and $\mu^2$ in $\mathbb{R}^d$, 
    the discrete measure $\gamma_\theta^{\mu^1,\mu^2}$ defined by \eqref{eq: gamma theta for general discrete} has marginals $\mu^1$ and $\mu^2$, that is, $\gamma_\theta^{\mu^1,\mu^2}\in\Gamma(\mu^1,\mu^2)\subset\mathcal{P}(\mathbb R^d\times \mathbb R^d)$.
\end{lemma}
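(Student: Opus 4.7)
The plan is to verify directly the marginal condition by summing the defining weights $u_\theta^{\mu^1,\mu^2}(x,y)$ over one coordinate and reducing the resulting expression through the quotient $\mathbb{R}^d/{\sim_\theta}$. By symmetry it suffices to show that the first marginal of $\gamma_\theta^{\mu^1,\mu^2}$ is $\mu^1$, i.e., that $\sum_{y\in \mathbb{R}^d} u_\theta^{\mu^1,\mu^2}(x,y)=p(x)$ for every $x\in\mathbb{R}^d$. The case $p(x)=0$ is immediate from the second branch in the definition of $u_\theta^{\mu^1,\mu^2}$, so I would fix $x$ with $p(x)\neq 0$ (in which case $P(\bar{x}^\theta)\neq 0$ by the remark preceding the lemma).

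For such $x$, I would partition the sum over $y$ according to the equivalence classes $\bar{y}^\theta\in \mathbb{R}^d/{\sim_\theta}$ and write
\begin{align*}
\sum_{y\in\mathbb{R}^d} u_\theta^{\mu^1,\mu^2}(x,y)
&= \sum_{\bar y^\theta\in \mathbb{R}^d/\sim_\theta}\ \sum_{y'\in \bar y^\theta,\, q(y')\neq 0} \frac{p(x)\,q(y')}{P(\bar x^\theta)\,Q(\bar y^\theta)}\,\Lambda_\theta^{\mu^1,\mu^2}(\{(\bar x^\theta,\bar y^\theta)\}) \\
&= \frac{p(x)}{P(\bar x^\theta)} \sum_{\bar y^\theta} \frac{\Lambda_\theta^{\mu^1,\mu^2}(\{(\bar x^\theta,\bar y^\theta)\})}{Q(\bar y^\theta)} \sum_{y'\in \bar y^\theta} q(y'),
\end{align*}
where in the last equality I use that $y'$ with $q(y')=0$ contribute zero, and that $Q(\bar y^\theta)\neq 0$ whenever the summand over the inner sum is nonzero. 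Next I would substitute $\sum_{y'\in \bar y^\theta} q(y') = Q(\bar y^\theta)$ to cancel the denominator, leaving $\frac{p(x)}{P(\bar x^\theta)} \sum_{\bar y^\theta} \Lambda_\theta^{\mu^1,\mu^2}(\{(\bar x^\theta,\bar y^\theta)\})$.

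Finally, since $\Lambda_\theta^{\mu^1,\mu^2}\in \Gamma(\theta_\#\mu^1,\theta_\#\mu^2)$, its first marginal evaluated at the singleton $\{\bar x^\theta\}$ equals $P(\bar x^\theta)$, so $\sum_{\bar y^\theta} \Lambda_\theta^{\mu^1,\mu^2}(\{(\bar x^\theta,\bar y^\theta)\}) = P(\bar x^\theta)$, and the expression collapses to $p(x)$, as desired. The argument for the second marginal is identical after swapping the roles of $(x,p,P)$ and $(y,q,Q)$.

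The only mild obstacle is bookkeeping: one must keep track of the fact that $u_\theta^{\mu^1,\mu^2}$ is defined case-wise and that countable sums must be manipulated carefully. This is easy here because $\Lambda_\theta^{\mu^1,\mu^2}$ is a probability measure (hence everything is absolutely summable) and the supports of $p$ and $q$ are at most countable, so the partition of the $y$-sum by $\sim_\theta$-classes is legitimate. No subtle analytic issues arise, and no additional structure (such as uniformity of the masses) is needed beyond what the definitions already provide.
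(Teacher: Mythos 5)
Your proof is correct and follows essentially the same route as the paper's: both handle the case $p(x)=0$ separately, then regroup the sum over $y$ by $\sim_\theta$-equivalence classes to cancel $Q(\bar y^\theta)$, and finally invoke the fact that the first marginal of $\Lambda_\theta^{\mu^1,\mu^2}$ is $\theta_\#\mu^1$ to collapse the remaining sum to $P(\bar x^\theta)$ and hence to $p(x)$. No discrepancies worth noting.
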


We refer the reader to the appendix for its proof.

\subsection{Expected Sliced Transport (EST) for discrete measures} \label{subsec:EST:ESTDis}

Leveraging on the transport plans $\gamma_\theta^{\mu^1,\mu^2}$ described before, in this section we propose a new transportation plan $\bar\gamma^{\mu^1,\mu^2}\in \Gamma(\mu^1,\mu^2)$, 
which will give rise to a new metric in the space of discrete probability measures.

\begin{definition}[Expected Sliced Transport plan]
Let $\sigma\in\mathcal{P}(\mathbb{S}^{d-1})$. 
For  discrete probability measures $\mu^1,\mu^2$ in $\mathbb{R}^d$, we define the expected sliced transport plan $\bar\gamma^{\mu^1,\mu^2}\in \mathcal{P}(\mathbb R^d \times \mathbb R^d)$ by  
\begin{equation}\label{eq: bar gamma 2}
    \bar{\gamma}^{\mu^1,\mu^2}:=\mathbb{E}_{\theta\sim \sigma}[\gamma_\theta^{\mu^1,\mu^2}],
\qquad \text{ where each } \gamma_\theta^{\mu^1,\mu^2} \text{ is given by } \eqref{eq: gamma theta for general discrete},
\end{equation}
that is, 
\begin{align*}
    \bar\gamma^{\mu^1,\mu^2}(\{(x,y)\})=\int_{\mathbb S^{d-1}}\gamma_\theta^{\mu^1,\mu^2}(\{(x,y)\})d\sigma(\theta), \qquad \forall x,y\in\mathbb{R}^d\times \mathbb{R}^d.
\end{align*}
In other words, $\bar\gamma^{\mu^1,\mu^2}=\sum_{x\in\mathbb R^d}\sum_{y\in\mathbb R^d}U^{\mu^1,\mu^2}(x,y)\delta_{(x,y)}$, where the new weights are given by
\begin{equation*}
    U^{\mu^1,\mu^2}(x,y)=\begin{cases}
        p(x)q(y)\int_{\mathbb S^{d-1}}\frac{\Lambda_\theta^{\mu^1,\mu^2}(\{(\bar x^\theta,\bar y^\theta)\})}{P(\bar x^\theta)Q(\bar y^\theta)}d\sigma(\theta) &\text{ if } p(x)\not=0 \text{ and } q(y)\not=0\\
        0 &\text{ otherwise}
    \end{cases}
\end{equation*}

\end{definition}

\begin{remark}
The measure $\bar{\gamma}^{\mu^1,\mu^2}$ is well-defined and, moreover, (as an easy consequence of Lemma~\ref{lem: gamma_theta discrete})  $\bar{\gamma}^{\mu^1,\mu^2}\in \Gamma(\mu^1,\mu^2)$, i.e., it has marginals $\mu^1$ and $\mu^2$. (See also Lemma~\ref{lem:barGammaMarginals} in the appendix.)
\end{remark}

\begin{definition}[Expected Sliced Transport distance]
Let $\sigma\in\mathcal{P}(\mathbb{S}^{d-1})$ with $\supp(\sigma)=\mathbb{S}^{d-1}$.
We define the Expected Sliced Transport discrepancy for discrete probability measures $\mu^1$,$\mu^2$ in $\mathbb R^d$ by 
\begin{align}
    \mathcal{D}_p(\mu^1,\mu^2)&:= \left(\sum_{x\in \mathbb{R}^d}\sum_{y\in\mathbb{R}^d} \|x-y\|^p \, \bar{\gamma}^{\mu^1,\mu^2}(\{(x,y)\})\right)^{1/p}, 
    \label{eq:est 2}
\end{align}
where $\bar\gamma^{\mu^1,\mu^2}$ is defined by \eqref{eq: bar gamma 2}.
\end{definition}

\begin{remark}
By defining the following generalization of the Sliced Wasserstein
Generalized Geodesics
(SWGG) dissimilarity presented in \cite{mahey2023fast},   \begin{equation}\label{eq:D_theta}      \mathcal{D}_p(\mu^1,\mu^2;\theta):=\left(\sum_{x\in \mathbb{R}^d}\sum_{y\in\mathbb{R}^d}\|x-y\|^p\gamma_\theta^{\mu^1,\mu^2}(\{x,y\})\right)^{1/p},
    \end{equation}
    we can rewrite \eqref{eq:est 2} as
    $$ \mathcal{D}_p(\mu^1,\mu^2)=\mathbb{E}^{1/p}_{\theta\sim\sigma}[\mathcal{D}^p_p(\mu^1,\mu^2;\theta)]$$
\end{remark}

\begin{remark}\label{remark: W<D}
Since the EST plan $\bar\gamma^{\mu^1,\mu^2}$ is a transportation plan, we have that   $$W_p(\mu^1,\mu^2)\leq \mathcal{D}_p(\mu^1,\mu^2).$$ In Appendix \ref{app: weak*} we will show that they define the same topology in the space of discrete probability measures. 
\end{remark}

\begin{remark}[EST for discrete uniform measures and the Projected Wasserstein distance]\label{remark: PWD} Consider uniform measures $\mu^1=\frac{1}{N}\sum_{i=1}^N \delta_{x_i},\mu^2=\frac{1}{N}\sum_{j=1}^N \delta_{y_j}\in \mathcal{P}_{(N)}(\mathbb{R}^d)$, and for $\theta\in\mathbb S^{d-1}$, let $\zeta_\theta,\tau_\theta\in \mathbf S_N$ be permutations that allow us to order the projected points as in \eqref{eq: order}.
 Notice that if $\sigma=\mathcal{U}(\mathbb{S}^{d-1})$, by using the formula \eqref{eq: unifom case map} for each assignment given $\theta$ and noticing that  $\tau_\theta^{-1}\circ\zeta_\theta\in\mathbf S_N$,  we can re-write \eqref{eq:est 2} as
 \begin{equation}\label{eq: PWD}
     \mathcal{D}_p(\mu^1,\mu^2)^p=  \mathbb{E}_{\theta\sim \mathcal{U}(\mathbb{S}^{d-1})}\left[\frac{1}{N}\sum_{i=1}^N \|x_i-y_{\tau^{-1}_{{\theta}}(\zeta_\theta(i))}\|^p\right].
 \end{equation}
 Therefore, 
 we have that the expression for $\mathcal{D}_p(\cdot,\cdot)$ given by \eqref{eq: PWD} coincides with the \textbf{Projected Wasserstein distance} proposed in \cite[Definition 3.1]{rowland2019orthogonal}. Then, by applying \cite[Proposition 3.3]{rowland2019orthogonal}, we have that the Expected Sliced Transport discrepancy defined in 
Equation \eqref{eq: PWD} is a metric on the space $\mathcal{P}_{(N)}(\mathbb{R}^d)$.
We generalise this in the next theorem.
\end{remark}

\begin{theorem}\label{thm: metric discrete}
The Expected Sliced Transport $\mathcal{D}_p(\cdot,\cdot)$ defined in \eqref{eq:est 2} is a metric in the space of finite discrete probability measures in $\mathbb{R}^d$. 
\end{theorem}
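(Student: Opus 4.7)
The plan is to verify the four metric axioms. Non-negativity is immediate from the definition. Symmetry follows because the one-dimensional optimal plan $\Lambda_\theta^{\mu^2,\mu^1}$ is the coordinate-swap of $\Lambda_\theta^{\mu^1,\mu^2}$; this property is inherited by the respective lifts $\bar\gamma^{\mu^2,\mu^1}$ and $\bar\gamma^{\mu^1,\mu^2}$, and combined with $\|x-y\|=\|y-x\|$ yields $\mathcal{D}_p(\mu^1,\mu^2)=\mathcal{D}_p(\mu^2,\mu^1)$. For the identity of indiscernibles: if $\mu^1=\mu^2=:\mu$, the 1D optimal self-plan $\Lambda_\theta^{\mu,\mu}$ is the diagonal coupling of $\theta_\#\mu$ with itself, so by \eqref{eq: gamma theta for general discrete} the lift $\gamma_\theta^{\mu,\mu}$ is concentrated on $\{(x,y)\in\supp(\mu)\times\supp(\mu):\theta\cdot x=\theta\cdot y\}$; since $\mu$ has finite support, for $\sigma$-a.e.\ $\theta$ this set reduces to the diagonal, forcing $\mathcal{D}_p(\mu,\mu)=0$. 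Conversely, since $\bar\gamma^{\mu^1,\mu^2}\in\Gamma(\mu^1,\mu^2)$ is admissible, Remark~\ref{remark: W<D} gives $W_p(\mu^1,\mu^2)\leq\mathcal{D}_p(\mu^1,\mu^2)$, so $\mathcal{D}_p=0$ forces $W_p=0$ and hence $\mu^1=\mu^2$.

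The main effort lies in the triangle inequality, which I would establish slice-wise through a three-marginal construction. Fix $\theta\in\mathbb{S}^{d-1}$ and discrete measures $\mu^1,\mu^2,\mu^3$ with densities $p,q,r$; the aim is to produce $\pi_\theta\in\mathcal{P}(\mathbb{R}^d\times\mathbb{R}^d\times\mathbb{R}^d)$ whose three pairwise marginals equal the lifted plans $\gamma_\theta^{\mu^1,\mu^2}$, $\gamma_\theta^{\mu^2,\mu^3}$, and $\gamma_\theta^{\mu^1,\mu^3}$. I would handle the one-dimensional problem first: letting $F_k$ denote the CDF of $\theta_\#\mu^k$ and $U\sim\mathrm{Unif}[0,1]$, set
\[
\Pi_\theta:=\bigl(F_1^{-1}(U),\,F_2^{-1}(U),\,F_3^{-1}(U)\bigr)_\#\mathrm{Leb}_{[0,1]}.
\]
By the quantile characterization of 1D optimal transport, each pairwise marginal of $\Pi_\theta$ is the unique optimal 1D plan $\Lambda_\theta^{\mu^i,\mu^j}$. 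I would then lift $\Pi_\theta$ to $\mathbb{R}^d\times\mathbb{R}^d\times\mathbb{R}^d$ via the natural three-factor analogue of \eqref{eq: gamma theta for general discrete}, splitting mass at each equivalence-class triple $(\bar x^\theta,\bar y^\theta,\bar z^\theta)$ proportionally to $p(x)q(y)r(z)/(P(\bar x^\theta)Q(\bar y^\theta)R(\bar z^\theta))$. A direct computation in the style of Lemma~\ref{lem: gamma_theta discrete} then verifies that the three pairwise marginals of $\pi_\theta$ are precisely the desired $\gamma_\theta^{\mu^i,\mu^j}$.

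Granted $\pi_\theta$, the rest is routine. The pointwise Euclidean triangle inequality combined with Minkowski's inequality in $L^p(\pi_\theta)$ yields the per-direction bound
\[
\mathcal{D}_p(\mu^1,\mu^3;\theta)\;\leq\;\mathcal{D}_p(\mu^1,\mu^2;\theta)+\mathcal{D}_p(\mu^2,\mu^3;\theta),
\]
and a second application of Minkowski, this time in $L^p(\sigma)$ applied to $\mathcal{D}_p(\mu^i,\mu^j)=(\mathbb{E}_{\theta\sim\sigma}\mathcal{D}_p^p(\mu^i,\mu^j;\theta))^{1/p}$, delivers the triangle inequality for $\mathcal{D}_p$.

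I expect the three-marginal lift to be the main obstacle. The naive gluing lemma, applied to $\gamma_\theta^{\mu^1,\mu^2}$ and $\gamma_\theta^{\mu^2,\mu^3}$ through the $y$-disintegration, does not in general produce a coupling whose $(x,z)$-marginal is $\gamma_\theta^{\mu^1,\mu^3}$; one sees this already on the line with $\mu^2=\delta_0$ and $\mu^1,\mu^3$ uniform on $\{0,1\}$, where the gluing outputs the product coupling on $\{0,1\}^2$ rather than the monotone optimal plan. The quantile coupling above circumvents this obstruction because in one dimension every pairwise optimal plan is monotone in the common driver $U$, so all three are realised simultaneously in a single joint law---a feature special to the one-dimensional setting, and precisely what the slice-and-lift philosophy of EST exploits.
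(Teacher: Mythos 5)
Your proof is correct, but the route you take for the triangle inequality is genuinely different from the paper's. The paper never builds a three-marginal coupling: it first invokes the known metric property of the Projected Wasserstein distance on $\mathcal{P}_{(N)}(\mathbb{R}^d)$ from \citet{rowland2019orthogonal}, extends it to rational weights by cloning atoms into a common uniform representation, and then reaches general finite discrete measures by approximating in total variation and proving stability of $\Lambda_\theta$, $\gamma_\theta$, $\bar\gamma$, and $\mathcal{D}_p$ along the approximating sequences (Lemmas~\ref{lemma: conv Lambda discrete}--\ref{lemma: conv D discrete}). Your argument instead works directly with arbitrary weights: the quantile coupling $\bigl(F_1^{-1}(U),F_2^{-1}(U),F_3^{-1}(U)\bigr)_\#\mathrm{Leb}$ realises all three pairwise monotone (hence, for $p>1$, optimal) 1D plans in a single law, the proportional-splitting lift preserves this (the computation mirrors Lemma~\ref{lem: gamma_theta discrete}: summing out the third coordinate within each equivalence class cancels $R(\bar z^\theta)$ and leaves the pairwise marginal of $\Pi_\theta$), and two applications of Minkowski finish the job. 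Your diagnosis of why naive gluing fails, and why comonotonicity rescues it in 1D, is exactly the right observation. What each approach buys: yours is self-contained, shorter, avoids the density/limit machinery entirely, and in fact needs no absolute-continuity assumption on $\sigma$ for the triangle inequality itself; the paper's approach front-loads TV-stability lemmas that it then reuses for the weak$^*$-equivalence results in Appendix~\ref{app: weak*}. One small point to make explicit: your forward direction of the identity of indiscernibles (and the paper's, which glosses over this) does require $\sigma(S_\mu)=0$, i.e.\ $\sigma\ll\mathcal{U}(\mathbb{S}^{d-1})$ as in Lemma~\ref{lem:no_overlap}, since for degenerate $\theta$ the lifted self-plan genuinely charges off-diagonal pairs within a common hyperplane; you note this, but it should be stated as a hypothesis rather than left implicit.
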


\begin{proof}[Sketch of the proof of Theorem \ref {thm: metric discrete}]
For the detailed proof, we refer the reader to Appendix \ref{app: metric discrete}. Here, we present a brief overview of the main ideas and steps involved in the proof.

The symmetry of $\mathcal{D}_p(\cdot,\cdot)$ follows from our construction of the transport plan $\bar\gamma^{\mu^1,\mu^2}$, which is based on considering a family of \textit{optimal} 1-d transport plans $\{\Lambda_\theta^{\mu^1,\mu^2}\}_{\theta\in\mathbb S^{d-1}}$. 
The identity of indiscernibles follows essentially from Remark \ref{remark: W<D}. 
To prove the triangle inequality we use the following strategy: 
\begin {enumerate}
\item We leverage the fact that  $\mathcal{D}_p(\cdot,\cdot)$ is a metric for the space $\mathcal{P}_{(N)}(\mathbb{R}^d)$ of uniform discrete probability measures concentrated at $N$ particles in $\mathbb{R}^d$ (\cite{rowland2019orthogonal}) to prove that $\mathcal{D}_p(\cdot,\cdot)$ is also a metric on the set of  measures in which the masses are rationals. To do so, we establish a correspondence between  finite discrete measures with rational weights and finite discrete measures with uniform mass (see the last paragraph of Proposition \ref{prop: rational metric}).
\item Given finite discrete measures $\mu^1,\mu^2$, we approximate them, in terms of the  Total Variation norm, by sequences of probability measures with rational weights  $\{\mu^1_n\},\{\mu^2_n\},$ supported on the same points as $\mu^1$ and $\mu^2$, respectively.
We then turn our attention on how the various plans constructed behave as the $n$ increases and show the following convergence results in Total Variation norm:
\begin{enumerate}
\item     The sequence $\left(\Lambda_\theta^{\mu^1_n,\mu_n^2}\right)_{n\in \mathbb N}$ converges to $\Lambda_\theta^{\mu^1,\mu^2}$.
\item
    The sequence  $\left(\gamma_\theta^{\mu^1_n,\mu_n^2}\right)_{n\in  \mathbb N}$ converges to $\gamma_\theta^{\mu^1,\mu^2}$.
\item
    The sequence  $\left(\bar\gamma^{\mu_n^1,\mu_n^2}\right)_{n\in \mathbb{N}}$ converges to $\bar\gamma^{\mu^1,\mu^2}$.     
\end{enumerate}
As a consequence, we obtain 
$\lim_{n\to\infty}\mathcal{D}_p(\mu_n^1,\mu_n^2)=\mathcal{D}_p(\mu^1,\mu^2).$
\end{enumerate}
   Finally, given  three finite discrete measures $\mu^1,\mu^2,\mu^3$, we proceed as in point 2 by considering sequences of probability measures with rational weights  $\{\mu^1_n\},\{\mu^2_n\}, \{\mu^3_n\}$ supported on the same points as $\mu^1$, $\mu^2$, $\mu^3$, respectively, that approximate the original measures in Total Variation, obtaining
   \begin{equation*}
       \mathcal{D}_p(\mu^1,\mu_n^2)=\lim_{n\to\infty}\mathcal{D}_p(\mu_n^1,\mu_n^2)\leq\lim_{n\to\infty} \mathcal{D}_p(\mu_n^1,\mu_n^3)+\mathcal{D}_p(\mu^3_n,\mu_n^2)=\mathcal{D}_p(\mu^1,\mu^3)+\mathcal{D}_p(\mu^3,\mu^2)
   \end{equation*}
   where the equalities follows from point 2 and the middle triangle inequality follows from point 1. \qedhere
\end{proof}

\section{Experiments}

\begin{figure}[t!]
    \centering
    \includegraphics[width=\linewidth]{./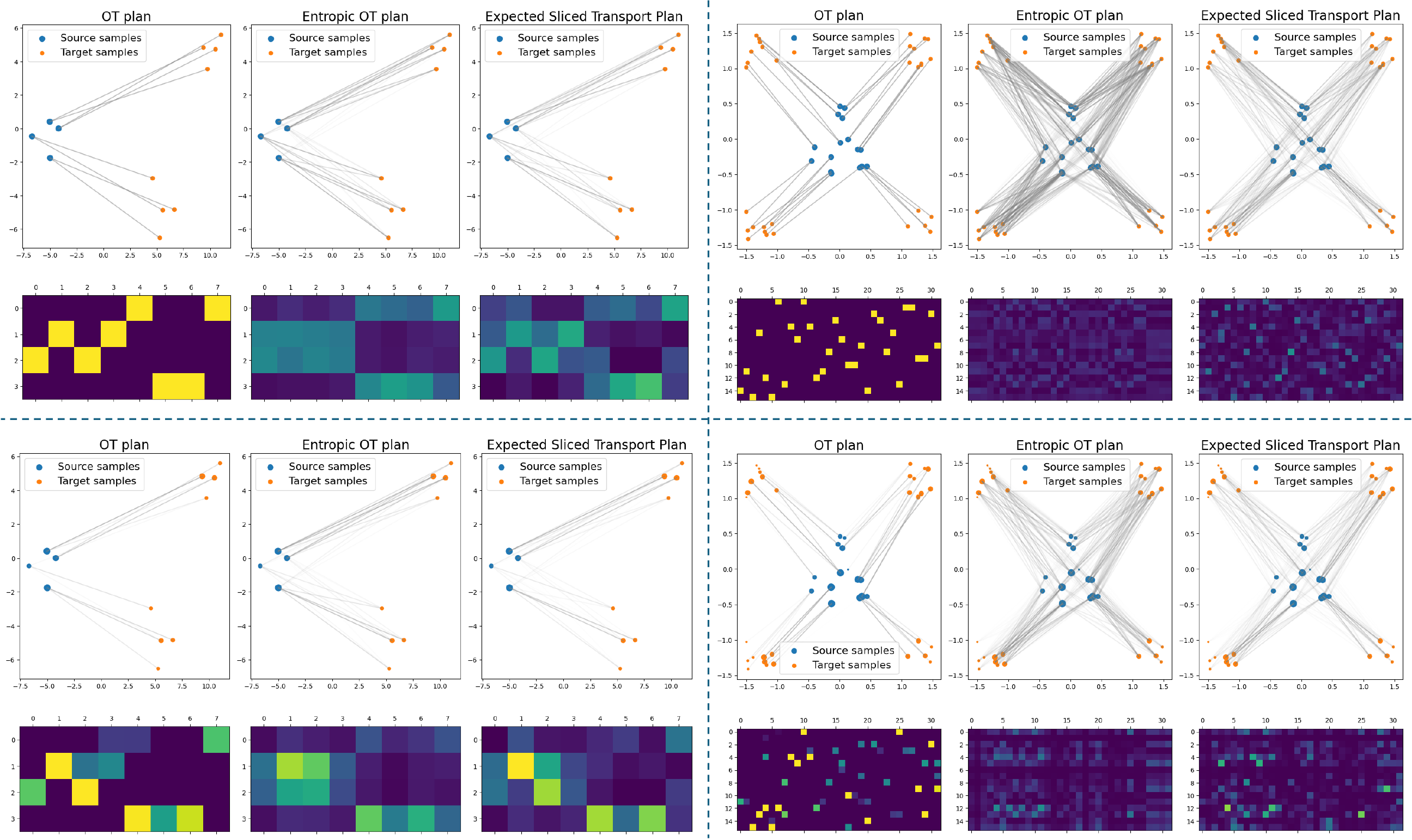}
    \caption{Depiction of transport plans (an optimal transport plan, a plan obtained from solving an entropically regularized transport problem, and the proposed expected sliced transport plan) between source (orange) and target (blue) for four different configurations of masses. The measures in the left and right panels are concentrated on the same particles, respectively; however, the top row depicts measures with uniform mass, while the bottom row depicts measures with random, non-uniform mass. Transportation plans are shown as gray assignments and as $n\times m$ heat matrices encoding the amount of mass transported (dark color = no transportation, bright color = more transportation), where $n$ is the number of particles on which the source measure is concentrated, and $m=2n$) is the number of particles on which the target measure is concentrated.}
    \label{fig:expected_transportation_plan}
    \vspace{-.1in}
\end{figure}

\subsection{Comparison of Transportation Plans}

Figure \ref{fig:expected_transportation_plan} illustrates the behavior of different transport schemes: the optimal transport plan for $W_2(\cdot,\cdot)$, the transport plan obtained by solving an entropically regularized transportation problem between the source and target probability measures, and the new expected sliced transport plan $\bar{\gamma}$.
We include comparisons with entropic regularization because it is one of the most popular approaches, as it allows for the use of Sinkhorn's algorithm. From the figure, we observe that while $\bar{\gamma}$ promotes mass splitting, this phenomenon is less pronounced than in the entropically regularized OT scheme. This observation will be revisited in Subsection  \ref{subse:temp}.

\begin{figure}[t!]
\vspace{-.2in}
    \centering
    \includegraphics[width=\linewidth]{./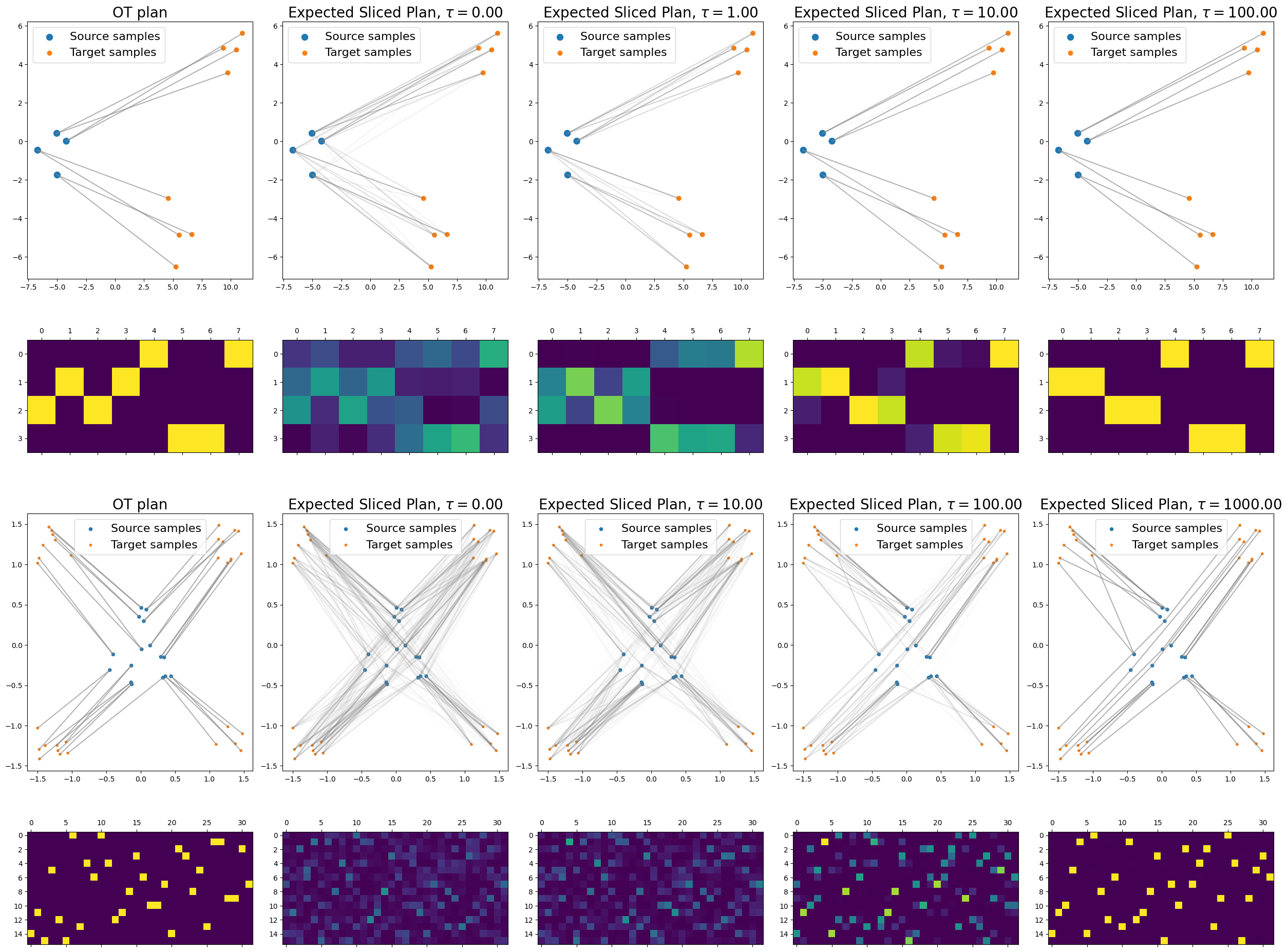}
    \caption{The effect of increasing $\tau$ (i.e., decreasing temperature) on the expected sliced plan. The left most column shows the OT plan, and the rest of the columns show the expected sliced plan as a function of increasing $\tau$. The right most column depicts that expected sliced plan recovers the min-SWGG \cite{mahey2023fast} transportation map.}
    \label{fig:temperature}
    \vspace{-.1in}
\end{figure}

\begin{figure}[t!]
    \centering
    \includegraphics[width=\linewidth]{./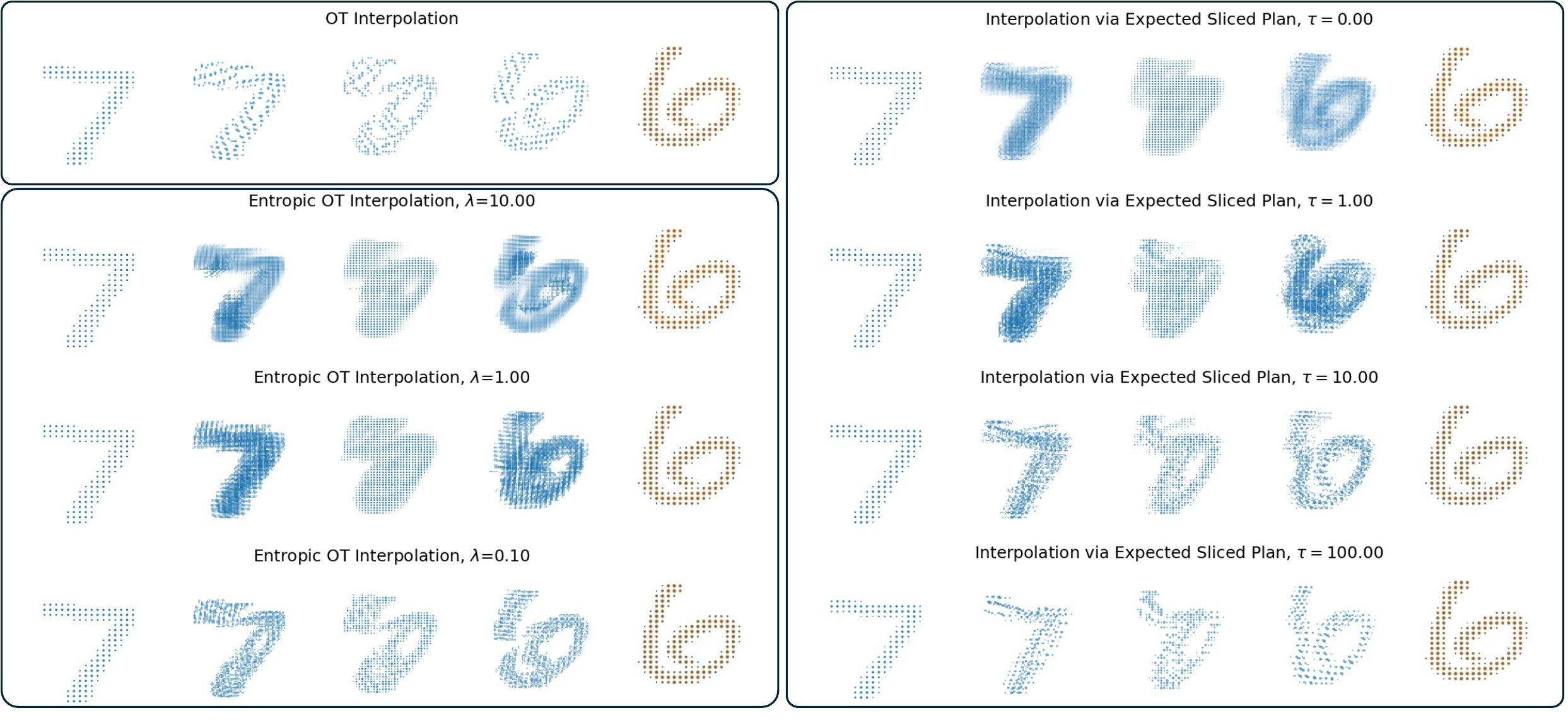}
    \caption{Interpolation between two point clouds via $((1-t)x+ty)_\# \gamma$, where $\gamma$ is the optimal transportation plan for $W_2(\cdot,\cdot)$ (top left), the transportation plan obtained from entropic OT with various regularization parameters (bottom left), and the expected sliced transport plans for different temperatures $\tau$ (right).}
    \label{fig:interpolation}
    \vspace{-.1in}
\end{figure}

\subsection{Temperature approach}\label{subse:temp}

Given $\mu^1,\mu^2$ discrete probability measures, we perform the new expected sliced transportation scheme by using the following averaging measure $\sigma_\tau\ll \mathcal{U}(\mathbb{S}^{d-1})$ on the sphere: 
    \begin{equation}\label{eq:temp}
        d\sigma_\tau(\theta)= \frac{e^{-\tau \mathcal{D}^p_p(\mu^1,\mu^2;\theta)}}{\int_{\mathbb S^{d-1}} e^{-\tau \mathcal{D}_p^p(\mu^1,\mu^2;\theta') }d\theta'},
    \end{equation}
where $\mathcal{D}_p(\mu^1,\mu^2;\theta)$ is given by \eqref{eq:D_theta}, and $\tau \geq 0$ is a hyperparameter we will refer to as the \textit{temperature}  (note that increasing $\tau$ corresponds to reducing the temperature). If $\tau = 0$, then $\sigma_0 = \mathcal{U} (\mathbb{S}^{d-1})$. However, when $\tau \neq 0$, $\sigma_\tau$ is a probability measure on $\mathbb{S}^{d-1}$ with density given by \eqref{eq:temp}, which depends on the source and target measures $\mu^1$ and $\mu^2$. We have chosen this measure $\sigma_\tau$ because it provides a general parametric framework that interpolates between our proposed scheme with the uniform measure ($\tau = 0$) and min-SWGG \citep{mahey2023fast}, as the EST distance approaches min-SWGG when $\tau \to \infty$. For the implementations, we use
\begin{equation}\label{eq:temp_implement}
\sigma_\tau(\theta^l)=\frac{e^{-\tau \mathcal{D}_p^p(\mu^1,\mu^2;\theta^l)}}{\sum_{\ell'=1}^Le^{-\tau \mathcal{D}_p^p(\mu^1,\mu^2;\theta^{\ell'})}},    
\end{equation}
where $L$ represents the number of slices or unit vectors $\theta^1, \dots, \theta^L \in \mathbb{S}^{d-1}$. Figure \ref{fig:temperature}
 illustrates that as $\tau \to \infty$, the weights used for averaging the lifted transportation plans converge to a one-hot vector, i.e., the slice minimizing $\mathcal D_p(\mu^1, \mu^2; \theta)$ dominates, leading to a transport plan with fewer mass splits. For the visualization we have used source $\mu^1$ and target $\mu^2$ uniform probability measures concentrated on different number of particles. For consistency, the configurations are the same as in Figure \ref{fig:expected_transportation_plan}.

\subsection{Interpolation}
We use the Point Cloud MNIST 2D dataset \citep{Garcia2023PointCloudMNIST2D}, a reimagined version of the classic MNIST dataset \citep{lecun1998mnist}, where each image is represented as a set of weighted 2D point clouds instead of pixel values. In Figure \ref{fig:interpolation}, we illustrate the interpolation between two point clouds that represent digits 7 and 6. Since the point clouds are discrete probability measures with non-uniform mass, we perform three different interpolation schemes via $((1-t)x + ty)_\#\gamma$ where $0 \leq t \leq 1$ for different transportation plans $\gamma$, namely:
\medskip

\begin{enumerate}
    \item $\gamma = \gamma^*$, an optimal transportation plan for $W_2(\cdot, \cdot)$; 
    \item a transportation plan $\gamma$ obtained from solving an entropically regularized transportation problem (performed for three different regularization parameters $\lambda$);
\item  $\gamma = \bar{\gamma}$: the expected sliced transport plan computed using $\sigma_\tau$ given by formula \eqref{eq:temp} (or \eqref{eq:temp_implement} for implementations) for four different values of the temperature parameter $\tau$.
\end{enumerate}

As the temperature increases, the transportation plan exhibits less mass splitting, similar to the effect of decreasing the regularization parameter $\lambda$ in entropic OT. However, unlike entropic OT, where smaller regularization parameters require more iterations for convergence, the computation time for expected sliced transport remains unaffected by changes in temperature.


\begin{wrapfigure}[23]{r}{0.4\linewidth}   
\vspace{-.2in}
    \includegraphics[width=\linewidth]{./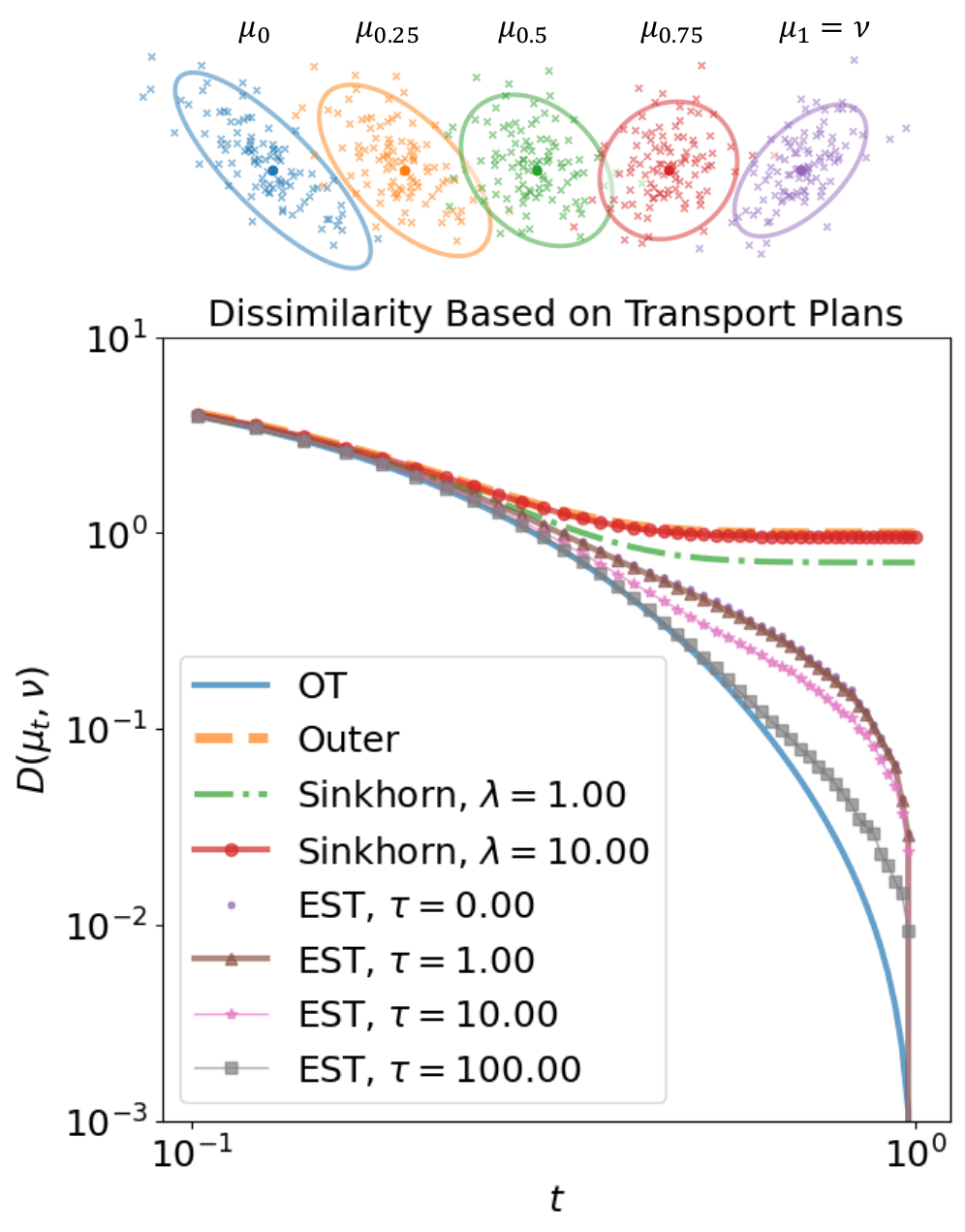}
    \vspace{-.3in}
    \caption{Discrepancies calculated from transportation plans between $\mu_t$ and $\nu$, when $\mu_t\weakstarto \nu$, as a function of $t\in [0,1]$.}
    \label{fig:weak_conv}
\end{wrapfigure}

\newpage
\subsection{Weak Convergence}
Given finite discrete probability measures $\mu$ and $\nu$, we consider $\mu_t$, for $0\leq t\leq 1$,  the Wasserstein geodesic between $\mu$ and $\nu$. In particular, $\mu_t$ is a curve of probability measures that interpolates $\mu$ and $\nu$, that is $\mu_1=\mu$ and $\mu_1=\nu$. Moreover, we have that $W_2(\mu_t,\nu)=(1-t)W_2(\mu,\nu)\longrightarrow 0$ as $t\to 1$, or equivalently, we can say $\mu_t$ converges in the weak$^*$-topology to $\nu$. Figure \ref{fig:weak_conv} illustrates that the expected sliced distance also satisfies $\mathcal{D}_2(\mu_t,\nu)\longrightarrow 0$ as $t\to 1$. Indeed, this experimental conclusion is justified by the following theoretical result: 

\textit{Let $\mu,\mu_n\in \cP(\Omega)$  be discrete measures with finite or countable support, where $\Omega\subset\bbR^d$ is compact.
Assume $\sigma\ll \cU(\mathbb S^{d-1})$. 
Then, $\cD_p(\mu_n,\mu)\to 0$ if and only if $\mu_n\weakstarto \mu$.}

We present its proof in Appendix \ref{app: weak*}.

For the experiment, $\mu$ and $\nu$ are chosen to be discrete measures with $N$ particles of uniform mass, sampled from two Gaussian distributions (see Figure \ref{fig:weak_conv}, top). 
For different values of time, $0\leq t\leq 1$, we compute different discrepancies, $\sum_{i=1}^N \sum_{j=1}^N \| x_i - y_j \|^2 \gamma^{\mu_t, \nu}_{ij}$, calculated for various transport plans: (1) the optimal transport plan, (2) the outer product plan $\mu_t \otimes \nu$, (3) the plan obtained from entropic OT with two different regularization parameters $\lambda$, and (4) our proposed expected sliced plan computed with $\sigma_\tau$ given in \eqref{eq:temp} for four different temperature parameters $\tau$. As $\mu_t$ converges to $\nu$, it is evident that both the OT and our proposed EST distance approach zero, while the entropic OT and outer product plans, as expected, do not converge to zero.

\subsection{Transport-Based Embedding}

Following the linear optimal transportation (LOT) framework, also referred to as the Wasserstein or transport-based embedding framework \citep{wang2013linear,kolouri2021wasserstein,nenna2023transport, bai2023linear,martin2024data}, we investigate the application of our proposed transportation plan in point cloud classification. Let $\mu_0=\sum_{i=1}^N \alpha_i\delta_{x_i}$, denote a reference probability measure, and let $\mu_k = \sum_{j=1}^{N_k} \beta^k_j \delta_{y^k_j}$ denote a target probability measure. Let $\gamma^{\mu_0,\mu_k}$ denote a transportation plan between $\mu_0$ and $\mu_k$, and define the barycentric projection \cite[Definition 5.4.2]{ambrosio2011gradient} of this plan as:
\begin{equation}
    b_i(\gamma^{\mu_0,\mu_k}):= \frac{1}{\alpha_i}\sum_{j=1}^{N_k} \gamma^{\mu_0,\mu_k}_{ij}y^k_j, \qquad i\in{1,...,N}.
\end{equation}
Note that $b_i(\gamma^{\mu_0,\mu_k})$ represents the center of mass to which $x_i$ from the reference measure is transported according to the transportation plan $\gamma^{\mu_0,\mu_k}$. When $\gamma^{\mu_0,\mu_k}$ is the OT plan, the LOT framework of \cite{wang2013linear} uses 
$$
[\phi(\mu_k)]_i := b_i(\gamma^{\mu_0,\mu_k})-x_i, \qquad i\in{1,...,N}
$$
as an embedding $\phi$ for the measure $\mu^k$. This framework, as demonstrated in \cite{kolouri2021wasserstein}, can be used to define a permutation-invariant embedding for sets of features and, more broadly, point clouds. More precisely, given a point cloud $\mathcal{Y}_k=\{(\beta_j^k,y^k_j)\}_{j=1}^{N_k}$, where  $\sum_{j=1}^{N_k}\beta_j^k=1$ and $\beta_j$ represent the mass at location $y_j$, we represent this point cloud as a discrete measure $\mu_k$. 

In this section, we use a reference measure with $N$ particles of uniform mass to embed the digits from the Point Cloud MNIST 2D dataset using various transportation plans. We then perform a logistic regression on the embedded digits and present the results in Figure \ref{fig:tSNE}. The figure shows a 2D t-SNE visualization of the embedded point clouds using: (1) the OT plan, (2) the entropic OT plan with two different regularization parameters, and (3) our expected sliced plan with two temperature parameters (using $N = 100$ for all methods). 
In addition, we report the test accuracy of these embeddings for different reference sizes.
\begin{figure}[t!]
    \centering    \includegraphics[width=\linewidth]{./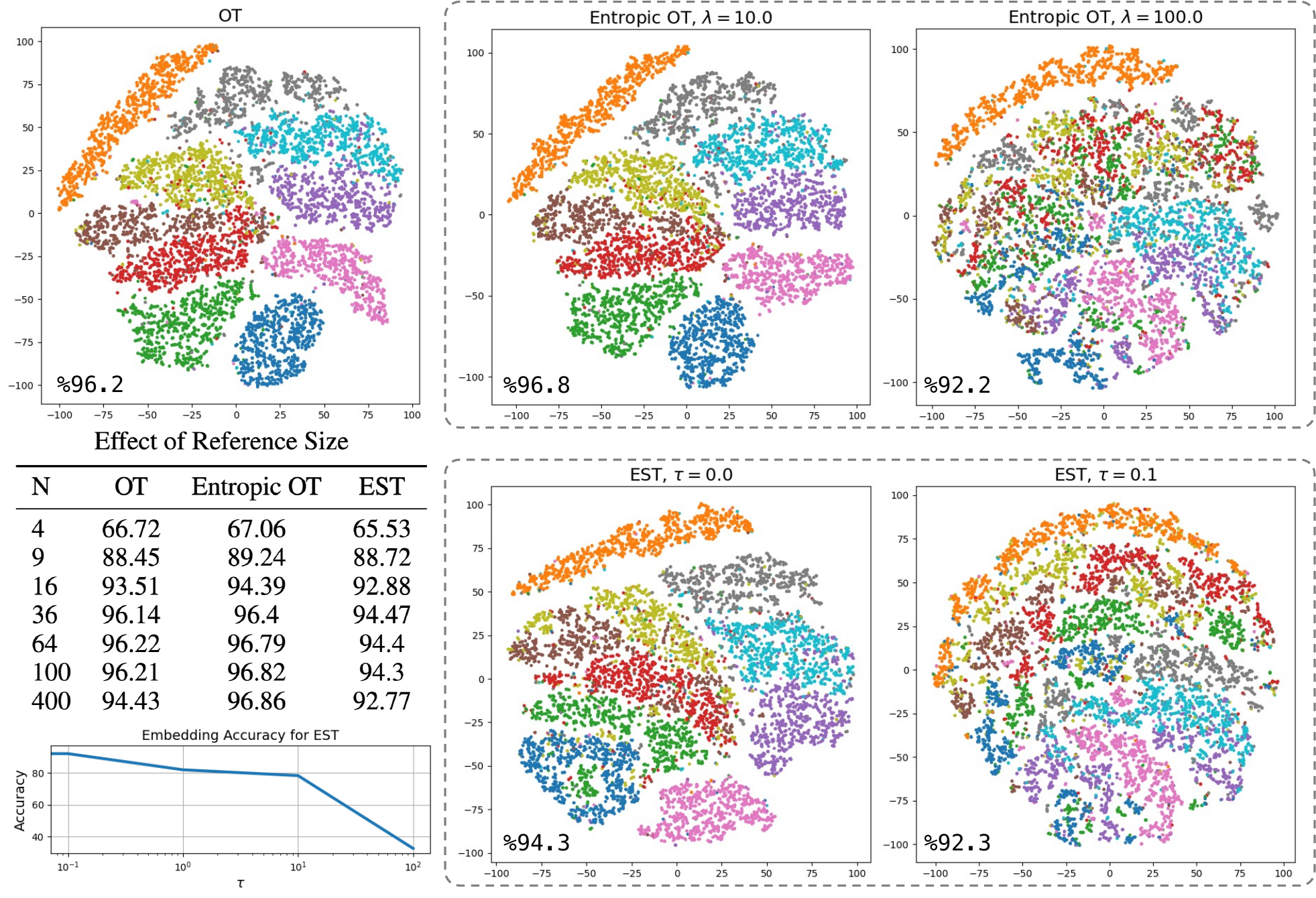}
    \vspace{-.2in}
    \caption{t-SNE visualization of the embeddings computed using different transportation plans, along with the corresponding logistic regression accuracy for each embedding. The t-SNE plots are generated for embeddings with a reference size of $N=100$, and for EST, we used $L=128$ slices. The table shows the accuracy of the embeddings as a function of reference size $N$. For the table, the regularization parameter for entropic OT is set to $\lambda=10$, and for EST, the temperature is set to $\tau=0$ with $L=128$ slices. Lastly, the plot on the bottom left shows the performance of EST, when $N=100$ and $L=128$, as a function of the temperature parameter, $\tau$.}
    \vspace{-.3in}
    \label{fig:tSNE}
\end{figure}


Lastly, we make an interesting observation about the embedding computed using EST. As we reduce the temperature, i.e., increase $\tau$, the embedding becomes progressively less informative. We attribute this to the dependence of $\sigma_\tau$ on $\mu_k$. In other words, the embedding is computed with respect to different $\sigma_\tau$ for different measures, leading to inaccuracies when comparing the embedded measures. This finding also suggests that the min-SWGG framework, while meritorious, may not be well-suited for defining a transport-based embedding.



\section{Conclusions}

In this paper, we explored the feasibility of constructing transportation plans between two probability measures using the computationally efficient sliced optimal transport (OT) framework. We introduced the Expected Sliced Transport (EST) framework and proved that it provides a valid metric for comparing discrete probability measures while preserving the computational efficiency of sliced transport and enabling explicit mass coupling. Through a diverse set of numerical experiments, we illustrated the behavior of this newly introduced transportation plan. Additionally, we demonstrated how the temperature parameter in our approach offers a flexible framework that connects our method to the recently proposed min-Sliced Wasserstein Generalized Geodesics (min-SWGG) framework. Finally, the theoretical insights and experimental results presented here open up new avenues for developing efficient transport-based algorithms in machine learning and beyond.

\section*{Acknowledgement}

SK acknowledges support from NSF CAREER Award \#2339898. 
MT was supported by the Leverhulme Trust Research through the Project
Award “Robust Learning: Uncertainty Quantification, Sensitivity and Stability” (grant agreement RPG-2024-051) and the EPSRC Mathematical and Foundations of Artificial Intelligence
Probabilistic AI Hub (grant agreement EP/Y028783/1).

\newpage
\clearpage

\bibliography{ExpST.bib}
\bibliographystyle{iclr2025_conference}

\appendix

\newpage
\clearpage
\section{Proof of Theorem \ref{thm: metric discrete}: Metric property of the expected sliced discrepancy for discrete probability measures}\label{app: metric discrete}

\subsection{Preliminaries on Expected Sliced Transportation}\label{app: metric prop prelim}

\begin{remark}[Figure \ref{fig: sliced_transport_1d}]\label{remark-caption} Let us elaborate on explaining Figure \ref{fig: sliced_transport_1d}. 
    (a) Visualization for uniform discrete measures $\mu^1=\frac{1}{3}(\delta_{x_1}+\delta_{x_2}+\delta_{x_3})$ (green circles), $\mu^2=\frac{1}{3}(\delta_{y_1}+\delta_{y_2}+\delta_{y_3})$ (blue circles)  in $\mathcal{P}_{(N)}(\mathbb R^d)$ with $n=3$. Given an angle $\theta$ (red unit vector), when sorting  $\{\theta\cdot x_i\}_{i=1}^3$ and  $\{\theta\cdot y_j\}_{j=1}^3$ we use permutations $\zeta_\theta$ and $\tau_\theta$ given by $\zeta_\theta(2)=1, \zeta_\theta(1)=2, \zeta_\theta(3)=3$, and $\tau_\theta(1)=1$, $\tau_\theta(3)=2$, $\tau_\theta(2)=3$. The optimal transport map between $\theta_\#\mu^1$ (green triangles) and $\theta_\#\mu^2$ (blue triangles) is given by the following assignment: $\theta\cdot x_{\zeta_\theta^{-1}(1)}=\theta\cdot x_{2}\longmapsto
    \theta\cdot y_1=\theta\cdot y_{\tau_\theta^{-1}(1)}$, 
    $\theta\cdot x_{\zeta_\theta^{-1}(2)}=\theta\cdot x_{1}\longmapsto
    \theta\cdot y_3=\theta\cdot y_{\tau_\theta^{-1}(2)}$, 
    $\theta\cdot x_{\zeta_\theta^{-1}(3)}=\theta\cdot x_{3}\longmapsto
    \theta\cdot y_2=\theta\cdot y_{\tau_\theta^{-1}(3)}$. This gives rise to the plan $\Lambda_\theta^{\mu^1,\mu^2}$ given in \eqref{eq: lambda for unif} with is represented by solid arrows in the first panel. The lifted plan $\gamma_\theta^{\mu^1,\mu^2}$ defined in \eqref{eq: gamma for unif} is represented in the second panel by dashed assignments. 
    (b) Visualization for finite discrete measures $\mu^1=0.1\delta_{x_1}+0.3\delta_{x_2}+0.6\delta_{x_3}$ (green circles), $\mu^2=0.5\delta_{y_1}+0.3\delta_{y_2}+0.2\delta_{y_3}$ (blue circles). When projection according a given direction $\theta$, the locations with green masses $0.3$ and $0.6$ overlap, as well as the locations with blue masses $0.2$ and $0.3$. Thus, the mass of  $\theta_\#\mu^1$ is concentrated at two green points (triangles) on the line determined by $\theta$, each one with $0.1$ and $0.9$  of the total mass, and similarly $\theta_\#\mu^2$ is concentrated at two points (blue triangles) each one with $0.5$ of the total mass.
\end{remark}

Now, let us prove Lemma \ref{lem: gamma_theta discrete}, that is, let us show that each measure $\gamma_\theta^{\mu^1,\mu^2}$ defined in \eqref{eq: gamma theta for general discrete} is a transport plan in $\Gamma(\mu^1,\mu^2)$.

\begin{proof}[Proof of Lemma \ref{lem: gamma_theta discrete}]
    Let $x\in\mathbb{R}^d$. First, if $p(x)=0$, then $u_\theta^{\mu^1,\mu^2}(x,y)=0$ for every $y\in\mathbb{R}^d$, and so $\gamma_\theta^{\mu^1,\mu^2}(\{x\}\times\mathbb{R}^d)=0=p(x)=\mu^1(\{x\})$. Now, assume that $p(x)\not =0$, then
    \begin{align*}
        \sum_{y\in \mathbb{R}^d}u_\theta^{\mu^1,\mu^2}(x,y)&=\frac{p(x)}{P(\bar x^\theta)}\sum_{y\in\mathbb{R}^d: \, q(y)\not=0}\frac{q(y)}{Q(\bar y^\theta)}\Lambda_\theta^{\mu^1,\mu^2}(\{(\bar x^\theta,\bar y^\theta)\})\\
        &=\frac{p(x)}{P(\bar x^\theta)}\sum_{\bar y^\theta\in \mathbb{R}^d/{\sim_\theta}: \, Q(\bar y^\theta)\not=0}\left(\sum_{y\in\bar y^\theta}q(y)\right)\frac{1}{Q(\bar y^\theta)}\Lambda_\theta^{\mu^1,\mu^2}(\{(\bar x^\theta,\bar y^\theta)\})\\
        &=\frac{p(x)}{P(\bar x^\theta)}\sum_{\bar y^\theta\in \mathbb{R}^d/{\sim_\theta}: \,  Q(\bar y^\theta)\not=0} Q(\bar y^\theta)\frac{1}{Q(\bar y^\theta)}\Lambda_\theta^{\mu^1,\mu^2}(\{(\bar x^\theta,\bar y^\theta)\})\\
        &=\frac{p(x)}{P(\bar x^\theta)}\sum_{\bar y^\theta\in \mathbb{R}^d/{\sim_\theta}} \Lambda_\theta^{\mu^1,\mu^2}(\{(\bar x^\theta,\bar y^\theta)\})=\frac{p(x)}{P(\bar x^\theta)}P(\bar x^\theta)=p(x).
    \end{align*}
Thus, $\gamma_\theta^{\mu^1,\mu^2}(\{x\}\times\mathbb{R}^d)=p(x)=\mu^1(\{x\})$ for every $x\in\mathbb{R}^d$.
 Similarly, $\sum_{x\in\mathbb{R}^d}u_\theta^{\mu^1,\mu^2}(x,y)=q(y)$, or equivalently, 
$\gamma_\theta^{\mu^1,\mu^2}(\mathbb{R}^d\times \{y\})=q(x)=\mu^2(\{y\})$ for every $y\in\mathbb{R}^d$ .   
\end{proof}


\begin{remark}[Expected Sliced Transport for uniform discrete measures]
    Let $\mu^1,\mu^2\in\mathcal{P}_{(N)}(\mathbb{R}^d)$ of the form 
$\mu^1=\frac{1}{N}\sum_{i=1}^N\delta_{x_i}$, $\mu^2=\frac{1}{N}\sum_{j=1}^N\delta_{y_j}$. Then,
the expected sliced transport plan between $\mu^1$ and $\mu^2$, $\bar\gamma^{\mu^1,\mu^2}=\mathbb{E}_{\theta\sim \sigma}[\gamma_\theta^{\mu^1,\mu^2}]$,  defines  
 a discrete measure on $\mathcal{P}(\mathbb R^d\times \mathbb R^d)$  supported on $\{(x_i,y_j)\}_{i,j\in[N]}$ where it takes the values
\begin{equation}\label{eq: bar gamma for unif}
  \bar\gamma^{\mu^1,\mu^2}(\{x_i,y_j\})=\int_{\mathbb S^{d-1}}\gamma_\theta^{\mu^1,\mu^2}(\{x_i,y_j\})d\sigma(\theta) \qquad \forall i\in [N], j\in [N].  
\end{equation}
Thus, it can be regarded as an $N\times N$ matrix whose $(i,j)$-entry is given by \eqref{eq: bar gamma for unif}. Moreover,  each $N\times N$ matrix $u_\theta^{\mu^1,\mu^2}$ defined by \eqref{eq: gamma theta for uniform} can be obtained by swapping rows from the 
 $N\times N$ identity matrix multiplied by $1/N$, there are finitely many matrices (precisely, $N!$ matrices in total). Hence, the function $\theta\mapsto u_\theta^{\mu^1,\mu^2}$ is a piece-wise constant matrix-valued function.  
 Thus, the function $\theta\mapsto\gamma_\theta^{\mu^1,\mu^2}$ (where $\gamma_\theta^{\mu^1,\mu^2}$ is an in \eqref{eq: gamma for unif}) 
 is a measurable function. This can be generalized for any pair of finite discrete measures as in the following remarks. 

\end{remark}

\begin{remark}[Expected Sliced Transport for finite discrete measures]\label{remark: finite case arbitrary}
Consider arbitrary \textit{finite discrete} measures $\mu^1=\sum_{i=1}^n p(x_i)\delta_{x_i}$    and $\mu^2=\sum_{j=1}^m q(y_i)\delta_{y_j}$, i.e., discrete measures with finite support. 
\begin{itemize}
    \item Fix $x_i,y_j\in\mathbb{R}^d$, then $\theta\mapsto \frac{p(x_i)q(y_j)}{P(\bar x_i^\theta)Q(\bar y_j^\theta)}\not=1$ for all but a finite number of directions. This is due to the fact that only for finitely many directions $\theta$ we obtain overlaps of the projected points $\theta\cdot x$. 
    \item The optimal transport plan $\Lambda_\theta^{\mu^1,\mu^2}$ is given by ``matching from left to right until fulfilling  the target bins'': that is, one has to order the points similarly as in \eqref{eq: order} and consider an  ``increasing'' assignment plan. Since the order of $\{\theta\cdot x_i\}_{i=1}^n$ and $\{\theta\cdot y_j\}_{j=1}^m$
changes a finite number of times when varying $\theta\in \mathbb S^{d-1}$, the function
    $\theta \mapsto \Lambda_\theta^{\mu^1,\mu^2}$ takes a finite number of possible transportation plan options.
\end{itemize}
Thus,  
the range of  $\theta\mapsto  u_\theta^{\mu^1,\mu^2}$ is finite.   
\end{remark}

\begin{remark}[$\bar{\gamma}^{\mu^1,\mu^2}$ is well-defined for finite discrete measures]
First, we notice that for each $\theta$, the support of $\Lambda_\theta^{\mu^1,\mu^2}$ is finite or countable, and so the support of $\gamma_\theta^{\mu^1,\mu^2}$ is also finite or countable.
Given an arbitrary point $(x,y)\in \mathbb{R}^d\times \mathbb{R}^d$, we have to justify that the function $\mathbb{S}^{d-1}\ni\theta\mapsto u_\theta^{\mu^1,\mu^2}(x,y)$ is (Borel)-measurable:
If the supports of $\mu^1$ and $\mu^2$ are finite, by Remark \ref{remark: finite case arbitrary}, $\theta\mapsto  u_\theta^{\mu^1,\mu^2}$ is a piece-wise constant function, and so it is measurable and integrable on the sphere.
For the general case, when the supports of $\mu^1$ and $\mu^2$ are countable, we refer to Remark \ref{rem: well defined countable support}. 
\end{remark}

\begin{lemma}[$\bar{\gamma}^{\mu^1,\mu^2}$ is a transportation plan between $\mu^1$ and $\mu^2$]
\label{lem:barGammaMarginals}
We have that $\bar{\gamma}^{\mu^1,\mu^2}\in \Gamma(\mu^1,\mu^2)$, i.e., it has marginals $\mu^1$ and $\mu^2$. This is because for each $\theta \in \mathbb S^{d-1}$, $\gamma_\theta^{\mu^1,\mu^2}\in \Gamma(\mu^1,\mu^2)$ and because $\sigma$ is a probability measure on $\mathbb{S}^{d-1}$. 
Then,  $\bar{\gamma}^{\mu^1,\mu^2}$ is a convex combination of transport plans  $\gamma_\theta^{\mu^1,\mu^2}$, and since $\Gamma(\mu^1,\mu^2)$ is a convex set, we obtain that$\bar{\gamma}^{\mu^1,\mu^2}\in \Gamma(\mu^1,\mu^2)$.
Precisely, for every test function $\phi:\mathbb R^d\to\mathbb{R}$
\begin{align*}
    \int_{\mathbb{R}^d\times \mathbb{R}^d} \phi(x) d\bar\gamma^{\mu^1,\mu^2}(x,y)&=    \int_{\mathbb{S}^{d-1}}\int_{\mathbb{R}^d\times \mathbb{R}^d} \phi(x) d\gamma_\theta^{\mu^1,\mu^2}(x,y)d\sigma(\theta)\\
    &=\int_{\mathbb S^{d-1}}\int_{\mathbb{R}^d} \phi(x) d\mu^1(x)d\sigma(\theta)=\int_{\mathbb{R}^d} \phi(x) d\mu^1(x)
\end{align*}
Similarly,
$\int_{\mathbb{R}^d\times \mathbb{R}^d} \psi(y) d\bar\gamma^{\mu^1,\mu^2}(x,y)=\int_{\mathbb{R}^d} \psi(y) d\mu^2(y),$
and so $\bar\gamma^{\mu^1,\mu^2}$ has marginals $\mu^1$ and $\mu^2$. 
\end{lemma}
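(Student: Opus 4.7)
The plan is to verify directly that, for every bounded Borel function $\phi$ on $\mathbb{R}^d$, integrating against $\bar\gamma^{\mu^1,\mu^2}$ in either the first or second variable recovers the integral against $\mu^1$ or $\mu^2$ respectively. Since Lemma~\ref{lem: gamma_theta discrete} already guarantees that each $\gamma_\theta^{\mu^1,\mu^2}$ belongs to $\Gamma(\mu^1,\mu^2)$, the conclusion should follow by interchanging the $\sigma$-integral with the marginal projection, i.e., by a Fubini-type argument. Equivalently, one can view $\bar\gamma^{\mu^1,\mu^2}$ as a barycentre (with respect to $\sigma$) of a measurable family inside the convex set $\Gamma(\mu^1,\mu^2)$, and use convexity plus measurability to land in the set.

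Concretely, the first step is to confirm that $\theta\mapsto \gamma_\theta^{\mu^1,\mu^2}(A)$ is Borel measurable for every Borel $A\subset\mathbb{R}^d\times\mathbb{R}^d$, so that the definition $\bar\gamma^{\mu^1,\mu^2}(A):=\int_{\mathbb{S}^{d-1}}\gamma_\theta^{\mu^1,\mu^2}(A)\,d\sigma(\theta)$ is well posed. For measures of finite support, Remark~\ref{remark: finite case arbitrary} already observes that $\theta\mapsto u_\theta^{\mu^1,\mu^2}$ takes only finitely many values and is piecewise constant on $\mathbb{S}^{d-1}$, which yields measurability immediately; for countable support one reduces to the finite case by restricting $\mu^1,\mu^2$ to finite subsets of their supports and passing to the limit by monotone convergence.

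With measurability in place, the second step is to apply Fubini's theorem: for any bounded Borel $\phi:\mathbb{R}^d\to\mathbb{R}$,
$$\int_{\mathbb{R}^d\times\mathbb{R}^d}\phi(x)\,d\bar\gamma^{\mu^1,\mu^2}(x,y)=\int_{\mathbb{S}^{d-1}}\left(\int_{\mathbb{R}^d\times\mathbb{R}^d}\phi(x)\,d\gamma_\theta^{\mu^1,\mu^2}(x,y)\right)d\sigma(\theta).$$
By Lemma~\ref{lem: gamma_theta discrete}, the inner integral equals $\int_{\mathbb{R}^d}\phi(x)\,d\mu^1(x)$, which is independent of $\theta$. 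Using $\sigma(\mathbb{S}^{d-1})=1$, the outer integral therefore collapses to $\int_{\mathbb{R}^d}\phi(x)\,d\mu^1(x)$, so the first marginal of $\bar\gamma^{\mu^1,\mu^2}$ is $\mu^1$. The same computation with a test function $\psi(y)$ of the second variable shows the second marginal is $\mu^2$, completing the proof.

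The bulk of the argument is routine bookkeeping once Lemma~\ref{lem: gamma_theta discrete} is established. The only subtle point I anticipate is the measurability of $\theta\mapsto\gamma_\theta^{\mu^1,\mu^2}$ in the countably supported case, because the one-dimensional optimal plan $\Lambda_\theta^{\mu^1,\mu^2}$ depends on $\theta$ via the ordering of projected atoms, and in principle infinitely many projection collisions can accumulate along special directions. This is the only nontrivial step, and it is handled either by the finite-support approximation just described or by invoking the weak$^*$-closedness and convexity of $\Gamma(\mu^1,\mu^2)$, which implies that the $\sigma$-barycentre of a measurable family of elements of $\Gamma(\mu^1,\mu^2)$ lies back in $\Gamma(\mu^1,\mu^2)$.
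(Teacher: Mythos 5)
Your proof is correct and takes essentially the same route as the paper: express $\bar\gamma^{\mu^1,\mu^2}$ as a $\sigma$-average of the couplings $\gamma_\theta^{\mu^1,\mu^2}$, apply Fubini to a test function of one variable, note the inner integral is independent of $\theta$ by Lemma~\ref{lem: gamma_theta discrete}, and use $\sigma(\mathbb{S}^{d-1})=1$; this is exactly the computation in the paper, and your convexity-of-$\Gamma(\mu^1,\mu^2)$ framing matches its opening sentence.

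One caveat on the side issue you correctly flag: for countable supports the paper does not argue by truncation, but assumes $\sigma\ll\Unif(\mathbb{S}^{d-1})$ and shows via Lemma~\ref{lem:no_overlap} that the set of collision directions is $\sigma$-null (Remark~\ref{rem: well defined countable support}), so that $\theta\mapsto u_\theta^{\mu^1,\mu^2}(x,y)$ agrees $\sigma$-a.e.\ with a measurable function. Your proposed reduction ``restrict to finite subsets and pass to the limit by monotone convergence'' does not literally work: the one-dimensional optimal plans of (renormalized) truncations do not converge monotonically in $n$, since adding an atom can globally rearrange the monotone matching. Pointwise convergence of $\theta\mapsto\gamma_\theta^{\mu^1_n,\mu^2_n}(A)$ for each fixed $\theta$ would suffice for measurability of the limit, but establishing that for every direction (including collision directions) is itself nontrivial, which is presumably why the paper opts for the null-set argument instead; also note that your fallback via weak$^*$-closedness of $\Gamma(\mu^1,\mu^2)$ cannot substitute here, since defining the barycentre already presupposes the measurability in question.
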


\subsection{An auxiliary result}

For simplicity, in this paper we consider the strictly convex cost $\|x-y\|^p$ ($1< p<\infty$). Also, in this section we consider $\sigma=\mathcal{U}(\mathbb{S}^{d-1})$ and in this case we denote $d\sigma(\theta)=d\theta$.

\begin{proposition}\label{prop: convergence of plan}
    Let $\Omega\subset\mathbb{R}^d$ be a compact set, and let  $\mu^1,\mu^2\in\mathcal{P}(\Omega)$. Let $(\mu_n^1)_{n\in\mathbb{N}}, (\mu_n^2)_{n\in\mathbb{N}}\subset\mathcal{P}(\Omega)$ be sequences such that, for $i=1,2$, $\mu_n^i \rightharpoonup^*\mu^i$ as $n\to\infty$,  where the limit is in the weak*-topology. For each $n\in \mathbb N$, consider optimal transportation plans $\gamma_n\in \Gamma^*(\mu^1_n,\mu^2_n)$. 
    Then, there exists a subsequence such that $\gamma_{n_k}\rightharpoonup^*\gamma$, for some optimal transportation plan $\gamma\in\Gamma^*(\mu^1,\mu^2)$.  
\end{proposition}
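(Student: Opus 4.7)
The plan is to combine a weak$^*$-compactness argument with the standard stability of optimal transport: extract a subsequential limit and then verify it is both admissible and optimal. Compactness of $\Omega$ will be essential since it makes the cost $c(x,y)=\|x-y\|^p$ bounded and continuous on $\Omega\times\Omega$, and it makes $\mathcal{P}(\Omega\times\Omega)$ sequentially weak$^*$-compact, so tightness of the sequence $(\gamma_n)$ is automatic.

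First I would extract a subsequential limit. Since $\Omega\subset\bbR^d$ is compact, so is $\Omega\times\Omega$, and Prokhorov's theorem gives sequential weak$^*$-compactness of $\mathcal{P}(\Omega\times\Omega)$. As each $\gamma_n$ lies in this space, there exists a subsequence $\gamma_{n_k}\weakstarto\gamma$ for some $\gamma\in\mathcal{P}(\Omega\times\Omega)$. Next, I would verify that $\gamma\in\Gamma(\mu^1,\mu^2)$: for any $\phi\in C(\Omega)$, the function $(x,y)\mapsto\phi(x)$ is bounded and continuous on $\Omega\times\Omega$, so
$$\int\phi(x)\,d\gamma(x,y)=\lim_{k\to\infty}\int\phi(x)\,d\gamma_{n_k}(x,y)=\lim_{k\to\infty}\int\phi\,d\mu^1_{n_k}=\int\phi\,d\mu^1,$$
where the second equality uses $\gamma_{n_k}\in\Gamma(\mu^1_{n_k},\mu^2_{n_k})$ and the third uses $\mu^1_{n_k}\weakstarto\mu^1$. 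A symmetric argument handles the second marginal.

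The hard part will be verifying optimality of $\gamma$. The cleanest route I would take uses continuity of $W_p$ on a compact domain: since $c$ is bounded and continuous on $\Omega\times\Omega$, weak$^*$ convergence gives $\int c\,d\gamma_{n_k}\to\int c\,d\gamma$; optimality of each $\gamma_{n_k}$ gives $\int c\,d\gamma_{n_k}=W_p^p(\mu^1_{n_k},\mu^2_{n_k})$; and since $W_p$ metrizes weak$^*$ convergence on $\mathcal{P}(\Omega)$ when $\Omega$ is compact, $W_p^p(\mu^1_{n_k},\mu^2_{n_k})\to W_p^p(\mu^1,\mu^2)$. Combining these three facts yields $\int c\,d\gamma=W_p^p(\mu^1,\mu^2)$, and hence $\gamma\in\Gamma^*(\mu^1,\mu^2)$.

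If the continuity of $W_p$ is to be avoided, an alternative would be either to show that the $c$-cyclical monotonicity of $\supp(\gamma_{n_k})$ passes to $\supp(\gamma)$ (a closed condition under weak$^*$ convergence on a compact set) and invoke the $c$-cyclical monotonicity characterization of optimal plans, or to construct, for each competitor $\tilde\gamma\in\Gamma(\mu^1,\mu^2)$, an approximating sequence $\tilde\gamma_n\in\Gamma(\mu^1_n,\mu^2_n)$ with $\int c\,d\tilde\gamma_n\to\int c\,d\tilde\gamma$ via a gluing/disintegration construction, and then pass the inequality $\int c\,d\gamma_n\leq\int c\,d\tilde\gamma_n$ to the limit. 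Either route leverages the compactness of $\Omega$ already used above.
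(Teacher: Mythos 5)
Your proposal is correct and follows essentially the same route as the paper: extract a weak$^*$ limit by compactness of $\mathcal{P}(\Omega\times\Omega)$, check the marginals against continuous test functions, and deduce optimality from the convergence $\int c\,d\gamma_{n_k}\to\int c\,d\gamma$ combined with the fact that $W_p$ metrizes weak$^*$ convergence on a compact domain (the paper spells out the latter via the triangle inequality and the continuity of $W_p(\cdot,\nu)$, which is the same key fact you invoke). The alternative routes you sketch ($c$-cyclical monotonicity or gluing) are not needed and are not what the paper does.
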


\begin{proof}
    As $(\gamma_n)_{n\in\mathbb{N}}$ is a sequence of probability measures, their mass is 1, by Banach-Alaoglu Theorem, there exists a subsequence such that $\gamma_{n_k}\rightharpoonup^*\gamma$, for some $\gamma\in\mathcal{P}(\Omega\times \Omega)$. It is easy to see that the limit $\gamma$ has marginals $\mu^1,\mu^2$. Indeed, given any test functions $\phi,\psi\in C(\Omega)$, since each $\gamma_n$ has marginals $\mu_n^1,\mu_n^2$, we have
    \begin{align*}
        \int_{\Omega\times \Omega} \phi(x) d\gamma_n(x,y)=\int_{\Omega} \phi(x) d\mu_n^1(x) \quad \text{ and } \quad         \int_{\Omega\times \Omega} \psi(y) d\gamma_n(x,y)=\int_{ \Omega} \psi(y) d\mu_n^2(y)
    \end{align*}
    and taking limit as $n\to\infty$, we obtain 
        \begin{align*}
        \int_{\Omega\times \Omega} \phi(x) d\gamma(x,y)=\int_{\Omega} \phi(x) d\mu^1(x) \quad \text{ and } \quad         \int_{\Omega\times \Omega} \psi(y) d\gamma(x,y)=\int_{ \Omega} \psi(y) d\mu^2(y).
    \end{align*}
    Now, we only have to prove the optimality of $\gamma$ for the OT problem between $\mu^1$ and $\mu^2$. Since $(x,y)\mapsto\|x-y\|^p$ is continuous and $\gamma_n$, $\gamma$ are compactly supported, by using that for each $n\in\mathbb N$, $\gamma_n$ is optimal for the OT problem between $\mu^1_n$ and $\mu^2_n$, we have
    \begin{align}
        \lim_{n\to\infty} \left(W_p(\mu^1_n,\mu^2_n)\right)^p&=\lim_{n\to\infty}\int_{\Omega\times \Omega}\|x-y\|^p d\gamma_n(x,y)\notag  \qquad \\
        &=\int_{\Omega\times \Omega}\|x-y\|^p d\gamma(x,y)\geq \left(W_p(\mu^1,\mu^2)\right)^p \label{aux: >=}
    \end{align}    
Also, by hypothesis and \cite[Theorem 5.10]{santambrogio2015optimal} we have that  for any $\nu\in\mathcal{P}(\Omega)$,     
    \begin{equation*}
    \lim_{n\to\infty}W_p(\mu_n^1,\nu)=W_p(\mu^1,\nu) \qquad \text{ and } \qquad \lim_{n\to\infty}W_p(\nu,\mu_n^2)=W_p(\nu,\mu^2).
    \end{equation*}
    So, by using the the triangle inequality for the $p$-Wasserstein distance we get
    \begin{align}
        \lim_{n\to\infty}W_p(\mu^1_n,\mu_n^2)&\leq \lim_{n\to\infty}W_p(\mu^1_n,\mu^1)+\lim_{n\to\infty} W_p(\mu^2,\mu_n^2)+W_p(\mu^1,\mu^2)\notag\\
        &=0+W_p(\mu^1,\mu^2)=W_p(\mu^1,\mu^2)\label{aux: <=}
    \end{align}
    Therefore, from \eqref{aux: <=} and \eqref{aux: >=} we have that
    \begin{equation*}
        \lim_{n\to\infty} W_p(\mu_n^1,\mu^2_n)=W_p(\mu^1,\mu^2).
    \end{equation*}
    In particular, in \eqref{aux: >=} we have that the following equality holds:
    $$\int_{\Omega\times \Omega}\|x-y\|^p d\gamma(x,y)= \left(W_p(\mu^1,\mu^2)\right)^p.$$
    As a result, $\gamma$ is optimal for the OT problem between  $\mu^1$ and $\mu^2$.
\end{proof}

\subsection{Finite discrete measures with rational weights}

Let us denote by $\mathcal{P}_{\mathbb Q}(\mathbb{ R}^d)$ the set of finite discrete probability measures in $\mathbb{R}^d$ with rational weights, that is, $\mu\in \mathcal{P}_{\mathbb Q}(\mathbb{R}^d)$ if and only if it is of the form $\mu=\sum_{i=1}^m q_i\delta_{x_i}$ with $x_i\in\mathbb{R}^d$, $q_i \in\mathbb Q$  $\forall i\in [m]$ for some $m\in \mathbb N$, and $\sum_{i=1}^m q_i=1$.
We have
$$\mathcal{P}_{(N)}(\mathbb{R}^d)\subset \mathcal{P}_{\mathbb Q}(\mathbb{R}^d), \qquad \forall N\in\mathbb N.$$
In the definition of an uniform discrete measure $\mu=\frac{1}{N}\sum_{i=1}^N\delta_{x_i}\in\mathcal{P}_{(N)}(\mathbb{R}^d)$ one can allow $x_i=x_j$ for some pairs of indexes $i\not =j$. 

\begin{proposition}\label{prop: rational metric}
    $\mathcal{D}_p(\cdot,\cdot)$ defined by \eqref{eq:est 2} is a metric in $\mathcal{P}_{\mathbb Q}(\mathbb{ R}^d)$.
\end{proposition}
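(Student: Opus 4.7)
The strategy is to reduce the proposition to the already-established fact, recalled in Remark~\ref{remark: PWD}, that $\mathcal{D}_p(\cdot,\cdot)$ is a metric on $\mathcal{P}_{(N)}(\mathbb{R}^d)$ for every $N \in \mathbb{N}$. The bridge is that any measure in $\mathcal{P}_{\mathbb Q}(\mathbb{R}^d)$ can be reinterpreted as a uniform discrete measure in $\mathcal{P}_{(N)}(\mathbb{R}^d)$ for a sufficiently large $N$ by splitting each atom into equal-mass copies at the same location. First I would dispose of symmetry and the identity of indiscernibles: symmetry of $\mathcal{D}_p$ is immediate because the unique one-dimensional optimal plan $\Lambda_\theta^{\mu^1,\mu^2}$ is symmetric in its arguments (after swapping coordinates), and this property is preserved by the lifting \eqref{eq: gamma theta for general discrete} and by averaging over $\theta$. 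The identity of indiscernibles follows from Remark~\ref{remark: W<D}: if $\mathcal{D}_p(\mu^1,\mu^2)=0$ then $W_p(\mu^1,\mu^2)=0$, so $\mu^1=\mu^2$; the reverse direction is trivial since $\bar\gamma^{\mu,\mu}$ is supported on the diagonal.

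The key step is to show that $\mathcal{D}_p$ is invariant under ``atom splitting''. Concretely, given $\mu^1=\sum_i p(x_i)\delta_{x_i}$ and $\mu^2=\sum_j q(y_j)\delta_{y_j}$ with rational weights, choose a common denominator $N$ so that $p(x_i)=a_i/N$ and $q(y_j)=b_j/N$, and define $\tilde\mu^1,\tilde\mu^2\in\mathcal{P}_{(N)}(\mathbb{R}^d)$ by listing each $x_i$ exactly $a_i$ times and each $y_j$ exactly $b_j$ times, each copy carrying mass $1/N$. Since $\mu^i=\tilde\mu^i$ as measures, the projections $\theta_\#\mu^i$ and $\theta_\#\tilde\mu^i$ coincide, and hence the unique one-dimensional optimal plan $\Lambda_\theta^{\mu^1,\mu^2}$ is the same measure on $\mathbb{R}\times\mathbb{R}$ whether computed from the general-discrete formulation or from the uniform formulation \eqref{eq: lambda for unif}. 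I would then verify the matrix identity
\[
\sum_{\substack{k:\,\tilde x_k=x \\ l:\,\tilde y_l=y}} u_\theta^{\tilde\mu^1,\tilde\mu^2}(k,l) \;=\; u_\theta^{\mu^1,\mu^2}(x,y) \qquad \forall (x,y)\in\mathrm{supp}(\mu^1)\times\mathrm{supp}(\mu^2),
\]
which says that when the labels used in \eqref{eq: gamma theta for uniform} are aggregated per location, one recovers the general-discrete weights of \eqref{eq: gamma theta for general discrete}. This identifies $\gamma_\theta^{\mu^1,\mu^2}=\gamma_\theta^{\tilde\mu^1,\tilde\mu^2}$ as measures on $\mathbb{R}^d\times\mathbb{R}^d$, and averaging in $\theta$ yields $\bar\gamma^{\mu^1,\mu^2}=\bar\gamma^{\tilde\mu^1,\tilde\mu^2}$ and hence $\mathcal{D}_p(\mu^1,\mu^2)=\mathcal{D}_p(\tilde\mu^1,\tilde\mu^2)$.

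Once this invariance is in hand, the triangle inequality is a short argument. Given $\mu^1,\mu^2,\mu^3\in\mathcal{P}_{\mathbb Q}(\mathbb{R}^d)$, pick one common denominator $N$ that works for \emph{all three} simultaneously, form the associated $\tilde\mu^1,\tilde\mu^2,\tilde\mu^3\in\mathcal{P}_{(N)}(\mathbb{R}^d)$, and apply the triangle inequality from Remark~\ref{remark: PWD} (i.e.\ the Projected Wasserstein distance result of \cite{rowland2019orthogonal}):
\[
\mathcal{D}_p(\mu^1,\mu^2)=\mathcal{D}_p(\tilde\mu^1,\tilde\mu^2)\le\mathcal{D}_p(\tilde\mu^1,\tilde\mu^3)+\mathcal{D}_p(\tilde\mu^3,\tilde\mu^2)=\mathcal{D}_p(\mu^1,\mu^3)+\mathcal{D}_p(\mu^3,\mu^2).
\]

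The main obstacle is the invariance identity in the previous paragraph. The subtlety is that when several projected values $\theta\cdot\tilde x_k$ coincide (which certainly happens whenever two copies of the same $x_i$ sit at the same location in $\mathbb{R}^d$), the sorting permutations $\zeta_\theta$, $\tau_\theta$ in \eqref{eq: order}--\eqref{eq: unifom case map} are non-unique, so one must check that every admissible sorted-matching choice produces the same aggregated mass at $(x,y)$, and that this aggregated mass equals the general-discrete weight dictated by $\Lambda_\theta^{\mu^1,\mu^2}$. This is essentially a combinatorial reformulation of the monotone-rearrangement characterization of one-dimensional OT: the ``northwest corner rule'' on the sorted labels of $\tilde\mu^1,\tilde\mu^2$ reproduces, after grouping by location, the monotone coupling between the cumulative distribution functions of $\theta_\#\mu^1$ and $\theta_\#\mu^2$. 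Verifying this carefully for an arbitrary $\theta$ is the only nontrivial computation; once done, the remaining metric axioms drop out mechanically from the reduction to $\mathcal{P}_{(N)}(\mathbb{R}^d)$.
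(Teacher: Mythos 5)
Your proposal is correct and follows essentially the same route as the paper: reduce measures with rational weights to uniform measures in $\mathcal{P}_{(N)}(\mathbb{R}^d)$ by taking a common denominator and splitting each atom into equal-mass copies, then invoke the metric property of the Projected Wasserstein distance from Remark~\ref{remark: PWD}. The only difference is one of emphasis --- you explicitly isolate and flag the ``invariance under atom splitting'' identity (that the aggregated uniform-case weights \eqref{eq: gamma theta for uniform} reproduce the general-discrete weights \eqref{eq: gamma theta for general discrete}) as the step requiring verification, whereas the paper treats this identification implicitly when it says the rational-weight measures ``can be regarded as'' elements of $\mathcal{P}_{(N)}(\mathbb{R}^d)$.
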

\begin{proof}
    This was essentially pointed out Remark \ref{remark: PWD}: When restricting to the space $\mathcal{P}_{(N)}(\mathbb{R}^d)$, our $\mathcal{D}_p(\cdot,\cdot)$ and the Projected Wasserstein distance presented in \citep{rowland2019orthogonal} coincide. \cite{rowland2019orthogonal} prove the metric property.  We recall here their main argument, which is used for showing the triangle inequality.   
    Given $\mu^1,\mu^2,\mu^3\in\mathcal{P}_{(N)}(\mathbb{R}^d)$ of the form $\mu^1=\frac{1}{N}\sum_{i=1}^N\delta_{x_i}$, $\mu^2=\frac{1}{N}\sum_{i=1}^N\delta_{y_i}$, $\mu^3=\frac{1}{N}\sum_{i=1}^N\delta_{z_i}$. Fix $\theta\in\mathbb{S}^{d-1}$, and consider permutations $\zeta_\theta,\tau_\theta, \xi_\theta\in\mathbf S_N$, so that 
\begin{align*}
    &\theta\cdot x_{\zeta_\theta^{-1}(1)}\leq \dots \leq \theta\cdot x_{\zeta_\theta^{-1}(N)},\\
    &\theta\cdot y_{\tau_\theta^{-1}(1)}\leq \dots \leq \theta\cdot y_{\tau_\theta^{-1}(N)},\\
    &\theta\cdot z_{\xi_\theta^{-1}(1)}\leq \dots \leq \theta\cdot z_{\xi_\theta^{-1}(N)}
\end{align*}
Thus, the key idea is that 
\begin{align*}
    \mathcal{D}_p(\mu^1,\mu^2)^p&=\int_{\mathbb S^{d-1}}\frac{1}{N}\sum_{i=1}^N\|x_{\zeta_\theta^{-1}(i)}-y_{\tau_\theta^{-1}(i)}\|^pd\theta=\sum_{\zeta,\tau,\xi\in\mathbf{S}_N}\frac{\mathbf{q}(\zeta,\tau,\xi)}{N}\sum_{i=1}^N\|x_{\zeta^{-1}(i)}-y_{\tau^{-1}(i)}\|^p\\
    \mathcal{D}_p(\mu^2,\mu^3)^p&=\int_{\mathbb S^{d-1}}\frac{1}{N}\sum_{i=1}^N\|y_{\tau_\theta^{-1}(i)}-z_{\xi_\theta^{-1}(i)}\|^pd\theta=\sum_{\zeta,\tau,\xi\in\mathbf{S}_N}\frac{\mathbf{q}(\zeta,\tau,\xi)}{N}\sum_{i=1}^N\|y_{\tau^{-1}(i)}-z_{\xi^{-1}(i)}\|^p\\
    \mathcal{D}_p(\mu^3,\mu^1)^p&=\int_{\mathbb S^{d-1}}\frac{1}{N}\sum_{i=1}^N\|z_{\xi_\theta^{-1}(i)}-x_{\zeta_\theta^{-1}(i)}\|^pd\theta=\sum_{\zeta,\tau,\xi\in\mathbf{S}_N}\frac{\mathbf{q}(\zeta,\tau,\xi)}{N}\sum_{i=1}^N\|z_{\xi^{-1}(i)}-x_{\zeta^{-1}(i)}\|^p
\end{align*}
where $\mathbf{q}\in \mathcal{P}(\mathbf{S}_N\times \mathbf{S}_N\times \mathbf{S}_N)$ is such that $\mathbf{q}(\zeta,\tau,\xi)$ is the probability that the tuple permutations $(\zeta,\tau,\xi)=(\zeta_\theta,\tau_\theta,\xi_\theta)$ are required, given that $\theta$ is drawn from $\Unif(\mathbb S^{d-1})$.
With these alternative expressions established by the authors in \citep{rowland2019orthogonal}, the triangle inequality follows from the standard Minkowski inequality for weighted finite $L^p$-spaces.  


Finally, notice that we have used the fact that each $\mu^{1}$ is associated to $N$-indexes $\{1,\dots,N\}$, without asking that the points $\{x_i\}$ do not overlap, i.e., they could be repeated. That is, given $\mu^1=\frac{1}{N}\sum_{i=1}^n\delta_{x_i}\in\mathcal{P}_{(N)}(\mathbb{R}^d)$ one can allow $x_i=x_j$ for some pairs of indexes $i\not =j$
(analogously for $\mu^2$ and $\mu^3$). Thus, the proof also holds for measures in $\mathcal{P}_\mathbb{Q}(\mathbb{R}^d)$: Indeed, let $\mu^1,\mu^2,\mu^3\in\mathcal{P}_{\mathbb Q}(\mathbb{R}^d)$ be of the form $\mu^1=\sum_{i=1}^{n_1}\frac{r_i^1}{s_i^1}\delta_{x_i}$, $\mu^2=\sum_{i=1}^{n_2}\frac{r_i^2}{s_i^2}\delta_{y_i}$, $\mu^3=\sum_{i=1}^{n_3}\frac{r_i^3}{s_i^3}\delta_{z_i}$ with $r_i^j,q_i^j\in\mathbb{N}$. First, consider the $n_1'$ as the
least common multiple of $\{s^1_1,\dots,s_{n_1}^1\}$ and, for each $i\in [n_1]$, let $\tilde r_i^1$ so that $\frac{\tilde r_i^1}{n_1'}=\frac{r_i^1}{s_i^1}$. Thus, we can rewrite $\mu^1=\sum_{i=1}^{n_1}\frac{\tilde r_i^1}{n_1'}\delta_{x_i}$. Notice that, since $\mu^1$ is a probability measure, we have $n_1'=\sum_{i=1}^{n_1}{\tilde r_i^1}$. Now,
for each $i\in [n_1]$ such that $\tilde r_i^1>1$,  consider $\tilde r_i^1$ copies of the corresponding point $x_i$ so that we can rewrite $\mu^1=\sum_{i=1}^{n_1'}\frac{1}{n_1'}\delta_{x_i}$ (where 
 we recall that $n_1'=\sum_{i=1}^{n_1}{\tilde r_i^1}$ and the points $x_i$ in the new expression can be repeated, i.e., they are not necessarily all different).  Repeat this process to rewrite $\mu^2=\sum_{i=1}^{n_2'}\frac{1}{n_2'}\delta_{y_i}$, $\mu^3=\sum_{i=1}^{n_3'}\frac{1}{n_3'}\delta_{z_i}$. Now, consider $N$ as the least common multiple of $n_1',n_2',n_3'$, and rewrite the measures as $\mu^1=\frac{1}{N}\sum_{i=1}^{N}\delta_{x_i}$, $\mu^2=\frac{1}{N}\sum_{i=1}^{N}\delta_{y_i}$, $\mu^3=\frac{1}{N}\sum_{i=1}^{N}\delta_{z_i}$ where the points $x_i$, $y_i$, $z_i$ can be repeated if needed. Thus,  $\mu^1,\mu^2,\mu^3$ can be regarded as measures in $\mathcal{P}_{(N)}(\mathbb{R}^d)$ where $\mathcal{D}_p(\cdot,\cdot)$ behaves as a metric. 
\end{proof}

\subsection{The proof for general finite discrete measures}

We first introduce some notation. Consider a finite discrete probability measure $\mu\in\mathcal{P}(\mathbb{R}^d)$ of the form $\mu=\sum_{i=1}^{m}p^i\delta_{x_i}$, with general weights $p^i\in \mathbb{R}_+$ such that $\sum_{i=1}^m p^i=1$. For each $i\in \{1,\dots, m-1\}$, 
consider an increasing sequence of rational numbers $(p_n^i)_{n\in\mathbb N}\subset \mathbb Q$, with $0\leq p_n^i\leq p^i$, such that $\lim_{n\to\infty}p^i_n=p^i$. For $i=m$, consider the sequence $(p_n^m)_{n\in\mathbb N}\subset \mathbb Q$ defined by $0\leq p_n^m:=1-\sum_{i=1}^{m-1}p_n^i\leq 1$. Thus, $\lim_{n\to\infty}p_n^m=1-\lim_{n\to\infty}\sum_{i=1}^{m-1}p_n^i=1-\sum_{i=1}^{m-1}p^i=p^m$.  Define the sequence of probability measures $(\mu_n)_{n\in\mathbb N}$ given by $\mu_n:=\sum_{i=1}^m p^i_n\delta_{x_i}\in \mathcal{P}_\mathbb{Q}(\mathbb{R}^d)$.
It is easy to show that $(\mu_n)_{n\in\mathbb N}$ converges to $\mu$ in Total Variation (i.e., uniform convergence or strong convergence): Indeed, let $\varepsilon>0$. For each $i\in [m]$, let $N_i\in\mathbb N$ such that $|p^i_n-p^i|<\varepsilon/m$ $\forall n\geq N_i$ and define $N=\max\{N_1,\dots, N_m\}$. Now, given any set $B\subset \mathbb{R}^d$ we obtain, for $n\geq N$,
\begin{align*}
    |\mu_n(B)-\mu(B)|&=\left|\sum_{i\in[m]: \, x_i\in B} (p_n^i-p^i)\right| \qquad \text{($\mu_n$ and $\mu$ have the same support)}\\
    &\leq \sum_{i\in[m]: \, x_i\in B}|p_n^i-p^i| \qquad \text{(triangle inequality)} \\
    &\leq \sum_{i\in [m]}|p_n^i-p^i| \qquad \text{(sum over all indexes to get independence of the set $B$)}\\
    &<\varepsilon. 
\end{align*}
This shows that $\lim_{n\to \infty}\|\mu_n-\mu\|_{\TV}=0$. Moreover, this shows that in this case, i.e., when approximating a finite discrete measure $\mu$ by a sequence of measures having the same support as $\mu$, we only care about point-wise convergence. 

We will now introduce some lemmas which together with the above proposition will allow us to prove the metric property of $\mathcal{D}_p(\cdot,\cdot)$ for finite discrete probability measures. For all of them we will consider:
\begin{itemize}
    \item $\mu^1,\mu^2$ two finite discrete probability measures in $\mathbb{R}^d$ given by $\mu^1=\sum_{i=1}^{m_1}p^i\delta_{x_i}$, $\mu^2=\sum_{j=1}^{m_2}q^j\delta_{y_j}$
    \item $(\mu^1_n)_{n\in\mathbb{N}}$, $(\mu^2_n)_{n\in\mathbb{N}}$ approximating sequences of probability measures $\mu^1_n=\sum_{i=1}^{m_1}p^i_n\delta_{x_i}$, $\mu^2_n=\sum_{j=1}^{m_2}q^j_n\delta_{y_j}$, with rational weights $\{p^i_n\}$, $\{q^j_n\}$, defined in analogy to what we have already done, i.e., so that, for $k=1,2$ we have that $(\mu^k_n)_{n\in \mathbb N}$ is a sequence of probability measures that converges to $\mu^k$ in Total Variation).
\end{itemize}   
Also, for each $\theta$,  $\Lambda_\theta^{\mu^1,\mu^2}$ denotes the unique optimal transport plan between $\theta_\#\mu^1$ and $\theta_\#\mu^2$; $\gamma_\theta^{\mu^1,\mu^2}$ denotes lifted transport plan between $\mu^1$ and $\mu^2$ given as in \eqref{eq: gamma theta for general discrete}; and $\bar \gamma^{\mu^1,\mu^2}$ the expected sliced transport plan between $\mu^1$ and $\mu^2$ given as in \eqref{eq: bar gamma 2}. Similarly, for each $n\in \mathbb{N}$ we consider the plans $\Lambda_\theta^{\mu^1_n,\mu^2_n}$, $\gamma_\theta^{{\mu^1_n,\mu^2_n}}$, and  $\bar \gamma^{{\mu^1_n,\mu^2_n}}$.

\begin{lemma}\label{lemma: conv Lambda discrete}
    The sequence  $\left(\Lambda_\theta^{\mu^1_n,\mu_n^2}\right)_{n\in \mathbb N}$ converges to $\Lambda_\theta^{\mu^1,\mu^2}$ in Total Variation.
\end{lemma}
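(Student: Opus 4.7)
The plan is to combine the stability of 1D optimal transport given by Proposition~\ref{prop: convergence of plan} with the uniqueness of the 1D optimal plan, exploiting the fact that all plans involved live on a common \emph{finite} product set. The crux will be to promote the subsequential weak$^*$ stability of Proposition~\ref{prop: convergence of plan} to full-sequence TV convergence.

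First, I would record the preliminary convergences. Since $\mu^k_n$ and $\mu^k$ share support in $\mathbb{R}^d$ (only the weights move), the convergences $p_n^i\to p^i$ and $q_n^j\to q^j$ yield $\mu^k_n\to \mu^k$ in TV, and therefore the pushforwards converge in TV as well, $\theta_\#\mu^k_n\to \theta_\#\mu^k$, hence weak$^*$. All of $\theta_\#\mu^1_n,\theta_\#\mu^1$ are supported in the fixed finite set $A=\{\theta\cdot x_i\}_{i=1}^{m_1}$, and analogously $\theta_\#\mu^2_n,\theta_\#\mu^2$ in $B=\{\theta\cdot y_j\}_{j=1}^{m_2}$, both contained in some compact interval $\Omega\subset\mathbb{R}$.

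Next, I would apply Proposition~\ref{prop: convergence of plan} on $\Omega$: every subsequence of $(\Lambda_\theta^{\mu^1_n,\mu^2_n})_{n\in\mathbb{N}}$ admits a further sub-subsequence converging weak$^*$ to some optimal plan $\Lambda^\infty\in \Gamma^*(\theta_\#\mu^1,\theta_\#\mu^2)$. Because the cost $|u-v|^p$ with $p>1$ is strictly convex, the one-dimensional OT problem \eqref{eq:OT_theta} admits a unique optimal plan, already denoted $\Lambda_\theta^{\mu^1,\mu^2}$. Hence $\Lambda^\infty=\Lambda_\theta^{\mu^1,\mu^2}$, and the standard subsequence-of-subsequence argument promotes this to full-sequence convergence $\Lambda_\theta^{\mu^1_n,\mu^2_n}\weakstarto \Lambda_\theta^{\mu^1,\mu^2}$. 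Finally, all plans are supported in the common finite set $A\times B$; on such a set weak$^*$ convergence is equivalent to pointwise convergence of atomic weights, which for probability measures on a finite set coincides with $\ell^1$ and hence TV convergence.

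The main obstacle I anticipate is only the weak$^*$-to-TV upgrade, where one must ensure a common finite support. This is granted by the specific approximation scheme (the $\mu^k_n$ perturb weights over the fixed atoms of $\mu^k$), so no genuine difficulty arises. A more constructive alternative, bypassing Proposition~\ref{prop: convergence of plan} altogether, would invoke the explicit monotone-rearrangement formula for the 1D plan, writing each entry as $\max\{0,\min(F_1^n(i),F_2^n(j))-\max(F_1^n(i{-}1),F_2^n(j{-}1))\}$ in terms of the projected CDFs and reading off pointwise (hence $\ell^1$) convergence directly from the continuity of this expression in the marginal weights.
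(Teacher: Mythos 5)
Your proposal is correct and follows essentially the same route as the paper: apply Proposition~\ref{prop: convergence of plan} to the projected measures, invoke uniqueness of the one-dimensional optimal plan to identify the subsequential limit, upgrade weak$^*$ to TV convergence via the common finite support, and promote to the full sequence by the subsequence-of-subsequences (equivalently, contradiction) argument. The only cosmetic difference is the order of the last two steps, which does not affect the argument.
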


\begin{lemma}\label{lemma: conv gamma theta discrete}
    The  sequence  $\left(\gamma_\theta^{\mu^1_n,\mu_n^2}\right)_{n\in  \mathbb N}$ converges to $\gamma_\theta^{\mu^1,\mu^2}$ in Total Variation.
\end{lemma}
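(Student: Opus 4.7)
The plan is to reduce total variation convergence of $\gamma_\theta^{\mu^1_n,\mu^2_n}$ to $\gamma_\theta^{\mu^1,\mu^2}$ to atomwise convergence of weights on a common finite support, and then to invoke Lemma \ref{lemma: conv Lambda discrete} together with elementary continuity arguments on the defining formula \eqref{eq: gamma theta for general discrete}.

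First, I would observe that, by construction of the approximating sequences, $\mu^1_n$ has the same support $\{x_1,\dots,x_{m_1}\}$ as $\mu^1$, and similarly for $\mu^2_n$ and $\mu^2$. Consequently, both $\gamma_\theta^{\mu^1_n,\mu^2_n}$ and $\gamma_\theta^{\mu^1,\mu^2}$ are finitely supported inside the common set $S := \{(x_i,y_j) : i\in[m_1],\, j\in[m_2]\}$, so that
$$\|\gamma_\theta^{\mu^1_n,\mu^2_n} - \gamma_\theta^{\mu^1,\mu^2}\|_{\TV}
= \sum_{i=1}^{m_1}\sum_{j=1}^{m_2} \bigl| u_\theta^{\mu^1_n,\mu^2_n}(x_i,y_j) - u_\theta^{\mu^1,\mu^2}(x_i,y_j)\bigr|.$$
Since this is a finite sum, it suffices to show each summand tends to zero.

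Next, for a fixed atom $(x_i,y_j)\in S$, I would examine each factor in
$$u_\theta^{\mu^1,\mu^2}(x_i,y_j)=\frac{p^i\,q^j}{P(\bar x_i^\theta)\,Q(\bar y_j^\theta)}\,\Lambda_\theta^{\mu^1,\mu^2}\bigl(\{(\bar x_i^\theta,\bar y_j^\theta)\}\bigr)$$
and check pointwise convergence of the analogous quantities indexed by $n$. By construction, $p^i_n\to p^i>0$ and $q^j_n\to q^j>0$; also, $P_n(\bar x_i^\theta)=\sum_{i':\, x_{i'}\in\bar x_i^\theta} p_n^{i'}\to P(\bar x_i^\theta)>0$ (a finite sum of convergent sequences), and analogously $Q_n(\bar y_j^\theta)\to Q(\bar y_j^\theta)>0$. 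Moreover, since $\Lambda_\theta^{\mu^1_n,\mu^2_n}$ and $\Lambda_\theta^{\mu^1,\mu^2}$ are discrete measures living on the common finite set of projected atoms, the total variation convergence supplied by Lemma \ref{lemma: conv Lambda discrete} is equivalent to atomwise convergence, giving $\Lambda_\theta^{\mu^1_n,\mu^2_n}(\{(\bar x_i^\theta,\bar y_j^\theta)\})\to \Lambda_\theta^{\mu^1,\mu^2}(\{(\bar x_i^\theta,\bar y_j^\theta)\})$.

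Finally, since the denominators $P_n(\bar x_i^\theta)\,Q_n(\bar y_j^\theta)$ converge to strictly positive limits and are thus bounded away from zero for $n$ large, the ratio $u_\theta^{\mu^1_n,\mu^2_n}(x_i,y_j)$ depends continuously on the four converging factors, so $u_\theta^{\mu^1_n,\mu^2_n}(x_i,y_j)\to u_\theta^{\mu^1,\mu^2}(x_i,y_j)$. Summing these finitely many convergent terms yields the claim. The only substantive ingredient is Lemma \ref{lemma: conv Lambda discrete}; the rest is bookkeeping on finite supports and continuity of a rational expression in its arguments, so I do not anticipate any serious obstacle. The one subtlety to mention is that some weights $p_n^i$ may vanish for small $n$, but since $p^i>0$ we eventually have $p_n^i>0$, and the case where the weight is zero simply contributes a zero term that is consistent with the convention in \eqref{eq: gamma theta for general discrete}.
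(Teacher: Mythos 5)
Your proposal is correct and follows essentially the same route as the paper: the paper's proof of this lemma is a brief remark that, since all the measures share the finite support $\{(x_i,y_j)\}_{i\in[m_1],j\in[m_2]}$, the claim follows from the convergence of the weight ratios (the paper's equation \eqref{eq: aux weights discrete general case}) together with Lemma \ref{lemma: conv Lambda discrete}. You have simply spelled out the same bookkeeping in full, including the harmless subtlety about weights that may vanish for small $n$.
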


\begin{lemma}\label{lemma: conv bar gamma discrete}
    The sequence  $\left(\bar\gamma^{\mu_n^1,\mu_n^2}\right)_{n\in \mathbb{N}}$  converges to $\bar\gamma^{\mu^1,\mu^2}$ in Total Variation. 
\end{lemma}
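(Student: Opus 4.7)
The plan is to reduce Lemma \ref{lemma: conv bar gamma discrete} to Lemma \ref{lemma: conv gamma theta discrete} by pulling the total variation norm inside the expectation over $\theta$, and then applying the Dominated Convergence Theorem. The key observation is that $\bar\gamma^{\mu^1,\mu^2}$ is (by definition) the $\sigma$-average of the fiberwise plans $\gamma_\theta^{\mu^1,\mu^2}$, so a fiberwise TV estimate lifts to a global TV estimate.

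First I would exploit the finite-support structure. Since the approximating measures $\mu^k_n$ have the same support $\{x_i\}_{i=1}^{m_1}$ (resp.\ $\{y_j\}_{j=1}^{m_2}$) as $\mu^k$, all the plans $\bar\gamma^{\mu^1_n,\mu^2_n}$ and $\bar\gamma^{\mu^1,\mu^2}$ are concentrated on the same finite set $\{(x_i,y_j):i\in[m_1],\,j\in[m_2]\}$. Hence
\[
\|\bar\gamma^{\mu^1_n,\mu^2_n}-\bar\gamma^{\mu^1,\mu^2}\|_{\TV}
= \sum_{i=1}^{m_1}\sum_{j=1}^{m_2}\left|\int_{\mathbb S^{d-1}}\bigl(u_\theta^{\mu^1_n,\mu^2_n}(x_i,y_j)-u_\theta^{\mu^1,\mu^2}(x_i,y_j)\bigr)\,d\sigma(\theta)\right|.
\]
Pushing absolute values inside the integral and exchanging the (finite) sum with the integral yields
\[
\|\bar\gamma^{\mu^1_n,\mu^2_n}-\bar\gamma^{\mu^1,\mu^2}\|_{\TV}
\leq \int_{\mathbb S^{d-1}}\|\gamma_\theta^{\mu^1_n,\mu^2_n}-\gamma_\theta^{\mu^1,\mu^2}\|_{\TV}\,d\sigma(\theta).
\]

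Next I would verify the hypotheses of Dominated Convergence. The integrand converges to $0$ pointwise in $\theta$ by Lemma \ref{lemma: conv gamma theta discrete}. It is dominated by the constant $2$, because $\gamma_\theta^{\mu^1_n,\mu^2_n}$ and $\gamma_\theta^{\mu^1,\mu^2}$ are both probability measures, so $\|\gamma_\theta^{\mu^1_n,\mu^2_n}-\gamma_\theta^{\mu^1,\mu^2}\|_{\TV}\leq 2$, and the constant function $2$ is $\sigma$-integrable since $\sigma\in\mathcal{P}(\mathbb S^{d-1})$. For measurability, I would invoke Remark \ref{remark: finite case arbitrary}: when $\mu^1,\mu^2$ are finite discrete, $\theta\mapsto u_\theta^{\mu^1,\mu^2}(x_i,y_j)$ takes only finitely many values (the set of candidate matchings is finite, and the direction space $\mathbb S^{d-1}$ is partitioned into finitely many regions on which the sorted order of projections is constant), so each integrand is a piecewise constant, Borel-measurable function of $\theta$. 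The same applies to $\mu^1_n,\mu^2_n$, so the integrand of interest is measurable.

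Applying Dominated Convergence then gives $\int_{\mathbb S^{d-1}}\|\gamma_\theta^{\mu^1_n,\mu^2_n}-\gamma_\theta^{\mu^1,\mu^2}\|_{\TV}\,d\sigma(\theta)\to 0$, and the previous inequality closes the argument. The only point that requires real attention (as opposed to bookkeeping) is the swap of sum/integral and the measurability justification, and these are painless in the finite-support setting; all genuine work has already been done in Lemma \ref{lemma: conv gamma theta discrete}. If one later extends the statement to countably supported discrete measures, the main obstacle would shift to controlling the tail of the double sum uniformly in $\theta$, which would require a uniform integrability argument rather than a bare constant dominating function.
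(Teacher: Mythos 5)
Your proof is correct and follows essentially the same route as the paper: both reduce the claim to Lemma \ref{lemma: conv gamma theta discrete} via the Dominated Convergence Theorem with a trivial dominating constant, exploiting the fact that all plans share the same finite support so that pointwise, weak$^*$, and TV convergence coincide. The only (cosmetic) difference is that you apply DCT once to the scalar function $\theta\mapsto\|\gamma_\theta^{\mu^1_n,\mu^2_n}-\gamma_\theta^{\mu^1,\mu^2}\|_{\TV}$ after pulling the TV norm inside the integral, whereas the paper applies it atom-by-atom at each $(x_i,y_j)$ and converts to TV at the end.
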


\begin{lemma}\label{lemma: conv D discrete}
$\displaystyle \lim_{n\to\infty}\mathcal{D}_p(\mu^1_{n},\mu^2_{n})=\mathcal{D}_p(\mu^1,\mu^2)$. 

\end{lemma}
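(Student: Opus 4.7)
The plan is to invoke Lemma~\ref{lemma: conv bar gamma discrete} and reduce the statement to a convergence of finite sums. Since the approximating measures $\mu^1_n$ and $\mu^2_n$ are constructed to have exactly the same supports as $\mu^1$ and $\mu^2$, respectively (only the weights change along the sequence), every lifted plan $\bar\gamma^{\mu^1_n,\mu^2_n}$ as well as the limit plan $\bar\gamma^{\mu^1,\mu^2}$ are concentrated on the finite set
\begin{equation*}
S := \{(x_i,y_j) : i \in [m_1],\, j \in [m_2]\} \subset \mathbb{R}^d \times \mathbb{R}^d.
\end{equation*}
Consequently, by the definition \eqref{eq:est 2},
\begin{equation*}
\mathcal{D}_p(\mu^1_n,\mu^2_n)^p = \sum_{i=1}^{m_1}\sum_{j=1}^{m_2} \|x_i-y_j\|^p\, \bar\gamma^{\mu^1_n,\mu^2_n}(\{(x_i,y_j)\}),
\end{equation*}
and an analogous expression holds for $\mathcal{D}_p(\mu^1,\mu^2)^p$.

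Next, I would use Lemma~\ref{lemma: conv bar gamma discrete}, which guarantees that $\bar\gamma^{\mu^1_n,\mu^2_n} \to \bar\gamma^{\mu^1,\mu^2}$ in total variation. In particular, the total variation convergence implies pointwise convergence of the masses on singletons: for every $(x_i,y_j) \in S$,
\begin{equation*}
\lim_{n\to\infty} \bar\gamma^{\mu^1_n,\mu^2_n}(\{(x_i,y_j)\}) = \bar\gamma^{\mu^1,\mu^2}(\{(x_i,y_j)\}).
\end{equation*}
Since $S$ is a \emph{finite} index set and $\|x_i-y_j\|^p$ are fixed constants, the sum above is a finite linear combination of the converging weights, so the limit passes inside:
\begin{equation*}
\lim_{n\to\infty}\mathcal{D}_p(\mu^1_n,\mu^2_n)^p = \sum_{i=1}^{m_1}\sum_{j=1}^{m_2} \|x_i-y_j\|^p\, \bar\gamma^{\mu^1,\mu^2}(\{(x_i,y_j)\}) = \mathcal{D}_p(\mu^1,\mu^2)^p.
\end{equation*}
Finally, taking the $p$-th root and using continuity of $t\mapsto t^{1/p}$ on $[0,\infty)$ yields the claimed convergence.

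There is essentially no main obstacle here, as the finiteness of $S$ trivializes the exchange of limit and summation and all the delicate work has already been carried out in Lemma~\ref{lemma: conv bar gamma discrete}. The only point worth emphasizing, and which justifies why the proof is this straightforward, is the deliberate choice in the construction of the approximating sequences to keep the supports fixed; this turns what could have been a weak-$*$ convergence argument (requiring e.g.\ the compactness of the supports plus continuity of the cost) into a purely pointwise argument on a finite index set.
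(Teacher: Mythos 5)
Your proof is correct and follows essentially the same route as the paper: both arguments reduce the claim to Lemma~\ref{lemma: conv bar gamma discrete} and exploit the fact that all the plans are supported on the same finite set $\{(x_i,y_j)\}$, the paper phrasing this as the one-line bound $\left|\mathcal{D}_p(\mu_n^1,\mu_n^2)^p-\mathcal{D}_p(\mu^1,\mu^2)^p\right|\leq \max_{i,j}\|x_i-y_j\|^p\,\|\bar\gamma^{\mu^1_n,\mu^2_n}-\bar\gamma^{\mu^1,\mu^2}\|_{\TV}$ and you phrasing it as pointwise convergence of the weights followed by passing the limit through a finite sum. The two formulations are equivalent, so there is nothing to flag.
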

In general, notice that since
for every $i\in [m_1]$, $j\in[m_2]$ we have that
$\lim_{n\to\infty }p_n^i=p^i$, and $\lim_{n\to\infty }q_n^j=p^j$, then  we obtain that $\lim_{n\to\infty }P_n^i=P^i$, and $\lim_{n\to\infty }Q_n^j=Q^j$, where $P^i=\sum_{i\in [m_1]: x_i\in \bar x_i^\theta} p^i$, $Q^j=\sum_{j\in [m_2]: y_j\in \bar y_j^\theta} q^j$, and where, for each $n\in \mathbb N$, $P_n^i$ and $Q_n^j$ are analogously defined (see Subsection \ref{subsec: discrete measures}). Thus,
\begin{equation}\label{eq: aux weights discrete general case}
    \lim_{n\to\infty}\frac{p^i_n q^j_n}{P^i_nQ^j_n}=\frac{p^iq^j}{P^iQ^j} \qquad \forall  i\in [m_1], j\in [m_2].
\end{equation}

\begin{proof}[Proof of Lemma \ref{lemma: conv Lambda discrete}]
    The support of all the measures we are considering are finite and so, the measures have compact support. Hence, we can apply Proposition \ref{prop: convergence of plan} to $\theta_\#\mu^i$, $(\theta_\#\mu^i)_{n\in \mathbb{N}}$, $i=1,2$.  Specifically, given $(\Lambda_{\theta}^{\mu^1_n,\mu^2_n})_{n\in\mathbb N}\in \Gamma^*(\theta_\#\mu_n^1,\theta_\#\mu_n^2)$, there exists a subsequence $(\Lambda_{\theta}^{\mu^1_{n_k},\mu^2_{n_k}})_{K\in\mathbb N}$ and $\Lambda_\theta\in\Gamma^*(\theta_\#\mu^1,\theta_\#\mu^2)$ such that
    \begin{equation}\label{eq: *conv lambda}
        \Lambda_{\theta}^{\mu^1_{n_k},\mu^2_{n_k}}\rightharpoonup^* \Lambda_\theta.
    \end{equation}
    As we are in one dimension, the set $\Gamma^*(\theta_\#\mu^1,\theta_\#\mu^2)$ is a singleton, and so we have that $\Lambda_\theta=\Lambda_\theta^{\mu^1,\mu^2}$ is the unique optimal transport plan. 
    Since the supports of all the measures are the same, (that is, $\{(\theta\cdot x_i,\theta\cdot y_j)\}_{i\in[m_1],j\in[m_2]}$), the weak$^*$ convergence in \eqref{eq: *conv lambda} implies the stronger convergence in Total Variation.  

    Now, suppose that the original sequence $(\Lambda_{\theta}^{\mu^1_n,\mu^2_n})_{n\in\mathbb N}$ does not converge to $\Lambda_\theta^{\mu^1,\mu^2}$ (in Total Variation). Then, given $\varepsilon>0$, there exists a subsequence $(\Lambda_{\theta}^{\mu^1_{n_j},\mu^2_{n_j}})_{j\in\mathbb N}$ such that 
    \begin{equation}\label{eq: not conv}
      \|\Lambda_{\theta}^{\mu^1_{n_j},\mu^2_{n_j}}-\Lambda_\theta^{\mu^1,\mu^2}\|_{\TV}>\varepsilon  
    \end{equation}
    But again, from Proposition \ref{prop: convergence of plan}, using that the supports of all the measures involved are the same set $\{(\theta\cdot x_i,\theta\cdot y_j)\}_{i\in[m_1],j\in[m_2]}$, and the fact that $\Gamma^*(\theta_\#\mu^1,\theta_\#\mu^2)=\{\Lambda_\theta^{\mu^1,\mu^2}\}$ (only one optimal transport plan), we have that there exists a sub-subsequence such that
    \begin{equation*}
        \|\Lambda_{\theta}^{\mu^1_{n_{j_i}},\mu^2_{n_{j_i}}}-\Lambda_\theta^{\mu^1,\mu^2}\|_{\TV}<\varepsilon.
    \end{equation*}
    contradicting \eqref{eq: not conv}. Since the contradiction is achieved from assuming that  the whole sequence $(\Lambda_{\theta}^{\mu^1_n,\mu^2_n})_{n\in\mathbb N}$ does not converge to $\Lambda_\theta^{\mu^1,\mu^2}$, we have that, in fact, it does converge to $\Lambda_\theta^{\mu^1,\mu^2}$ in Total Variation. 
\end{proof}

\begin{proof}[Proof of Lemma \ref{lemma: conv gamma theta discrete}]
    This holds by looking at \eqref{eq: gamma theta for general discrete}: Due to the fact that the supports of $\mu_n^1$ and $\mu^1$ are the same (respectively, for $\mu_n^2$ and $\mu^2$), we only care about the locations $\{(x_i,y_j)\}_{i\in[m_1],j\in[m_2]}$, and then  by using \eqref{eq: aux weights discrete general case} and the convergence from Lemma \ref{lemma: conv Lambda discrete}, the result holds true.
\end{proof}

\begin{proof}[Proof of Lemma \ref{lemma: conv bar gamma discrete}]
    As pointed out before, we only care about point-wise convergence: That is, since the supports of the measures involved coincide (are the same set $\{(x_i,y_j)\}_{i\in[m_1],j\in[m_2]}$) weak$^*$ convergence, point-wise convergence and convergence in Total Variation are equivalent. 
    
    Since $0\leq \gamma_\theta^{\mu^1_n,\mu^2_n}(\{(x_i,y_j)\})\leq 1$ and $\mathbb{S}^{d-1}$ is compact, by the convergence result from Lemma \ref{lemma: conv gamma theta discrete} and using the Dominated Convergence Theorem, we have that for each $i\in[m_1], j\in [m_2]$,
    \begin{align*}
       \lim_{n\to \infty} \bar \gamma^{\mu_{n}^1,\mu^2_{n}}(\{(x_i,y_j)\})&=\lim_{n\to \infty}\int_{\mathbb S^{d-1}}\gamma_\theta^{\mu_{n}^1,\mu^2_{n}}(\{(x_i,y_j)\})d\theta\\
       &=\int_{\mathbb S^{d-1}}\lim_{n\to \infty}\gamma_\theta^{\mu_{n}^1,\mu^2_{n}}(\{(x_i,y_j)\})d\theta\\
       &=\int_{\mathbb S^{d-1}}\gamma_\theta^{\mu^1,\mu^2}(\{(x_i,y_j)\})d\theta=\bar \gamma^{\mu^1,\mu^2}(\{(x_i,y_j)\}) \qedhere
    \end{align*} 
\end{proof}

\begin{proof}[Proof of Lemma \ref{lemma: conv D discrete}]
    \begin{align}\label{eq: aux: d lim}
        \left|\mathcal{D}_p(\mu_{n}^1,\mu_{n}^2)^p-\mathcal{D}_p(\mu^1,\mu^2)^p\right|\leq \max_{i\in [m_1], j\in [m_2]}\{\|x_i-y_j\|^p\}\|\bar \gamma^{\mu^1_{n},\mu^2_{n}}-\bar \gamma^{\mu^1,\mu^2}\|_{\TV}
    \end{align}
    where the RHS goes to $0$ as $n\to \infty$, due to Lemma \ref{lemma: conv bar gamma discrete}. 
\end{proof}

\begin{theorem}
    $\mathcal{D}_p(\cdot,\cdot)$ is a metric for the space of finite discrete probability measures in $\mathbb{R}^d$.
\end{theorem}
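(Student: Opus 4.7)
The plan is to note that this theorem is the full version of Theorem \ref{thm: metric discrete}, and to assemble it from the propositions and lemmas that precede it in the appendix. I would check the three metric axioms in turn (non-negativity and symmetry, identity of indiscernibles, triangle inequality), handing off the non-trivial work to the rational case (Proposition \ref{prop: rational metric}) and the TV-approximation chain (Lemmas \ref{lemma: conv Lambda discrete}--\ref{lemma: conv D discrete}). Non-negativity is immediate from \eqref{eq:est 2}, so only the remaining three axioms require explicit argument.

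For symmetry, I would observe that one-dimensional optimal transport plans are symmetric: $\Lambda_\theta^{\mu^2,\mu^1}$ is the push-forward of $\Lambda_\theta^{\mu^1,\mu^2}$ under the swap $(u,v)\mapsto(v,u)$ (by uniqueness of the 1-d OT plan). Tracing this through the definitions, $\bar\gamma^{\mu^2,\mu^1}$ is the push-forward of $\bar\gamma^{\mu^1,\mu^2}$ under $(x,y)\mapsto(y,x)$, and since the cost $\|x-y\|^p$ is symmetric, $\mathcal{D}_p(\mu^1,\mu^2)=\mathcal{D}_p(\mu^2,\mu^1)$. For the identity of indiscernibles, if $\mu^1=\mu^2$ then $\Lambda_\theta^{\mu^1,\mu^1}$ is supported on the diagonal of $\mathbb{R}\times\mathbb{R}$ (via $\theta$), hence $\bar\gamma^{\mu^1,\mu^1}$ is supported on $\{(x,x):x\in\mathbb{R}^d\}$ and $\mathcal{D}_p(\mu^1,\mu^1)=0$. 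For the converse, Remark \ref{remark: W<D} gives $W_p(\mu^1,\mu^2)\leq \mathcal{D}_p(\mu^1,\mu^2)$, so $\mathcal{D}_p(\mu^1,\mu^2)=0$ forces $W_p(\mu^1,\mu^2)=0$ and thus $\mu^1=\mu^2$.

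For the triangle inequality, which is the core of the argument, I would follow the strategy already sketched after Theorem \ref{thm: metric discrete}. Given three finite discrete measures $\mu^1,\mu^2,\mu^3$, write $\mu^k=\sum_{i}p^{k,i}\delta_{x^k_i}$ and construct, for each $k\in\{1,2,3\}$, a sequence $(\mu^k_n)_{n\in\mathbb{N}}$ of rational-weight measures with the same support as $\mu^k$, using the elementary rational-approximation trick described in the text right before Lemma \ref{lemma: conv Lambda discrete}. Then $\mu^k_n\to\mu^k$ in total variation. By Proposition \ref{prop: rational metric}, $\mathcal{D}_p$ is a metric on $\mathcal{P}_{\mathbb{Q}}(\mathbb{R}^d)$, so
\begin{equation*}
\mathcal{D}_p(\mu^1_n,\mu^2_n)\leq \mathcal{D}_p(\mu^1_n,\mu^3_n)+\mathcal{D}_p(\mu^3_n,\mu^2_n)
\end{equation*}
for every $n$. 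Applying Lemma \ref{lemma: conv D discrete} to each of the three pairs and passing to the limit $n\to\infty$ yields the triangle inequality for $\mu^1,\mu^2,\mu^3$.

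The main obstacle, in terms of real content, is not in this wrap-up but in the preceding rational-approximation machinery, which is already handled: passing TV-convergence up the chain from $\Lambda_\theta^{\mu^k_n,\mu^j_n}\to\Lambda_\theta^{\mu^k,\mu^j}$ (Lemma \ref{lemma: conv Lambda discrete}, which uses uniqueness of 1-d OT plus a subsequence argument) through $\gamma_\theta^{\mu^k_n,\mu^j_n}\to\gamma_\theta^{\mu^k,\mu^j}$ and then, via dominated convergence on $\mathbb{S}^{d-1}$, to $\bar\gamma^{\mu^k_n,\mu^j_n}\to\bar\gamma^{\mu^k,\mu^j}$ and finally to $\mathcal{D}_p$. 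Given these ingredients, the proof of the theorem itself is a short assembly; the one subtlety worth flagging is that the approximating measures must be chosen to share the support of the original measure, so that TV-convergence coincides with pointwise convergence of the weights and the cost $\max_{i,j}\|x^k_i-y^j\|^p$ in the estimate of Lemma \ref{lemma: conv D discrete} remains uniformly bounded.
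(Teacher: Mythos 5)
Your proposal is correct and follows essentially the same route as the paper: non-negativity and symmetry from the construction, identity of indiscernibles via $W_p\le\mathcal{D}_p$, and the triangle inequality by rational-weight total-variation approximation through Proposition \ref{prop: rational metric} and Lemmas \ref{lemma: conv Lambda discrete}--\ref{lemma: conv D discrete}. The one caveat (shared with the paper's own wording) is that the lifted plan $\gamma_\theta^{\mu,\mu}$ is supported on the diagonal of $\mathbb{R}^d\times\mathbb{R}^d$ only for $\theta$ outside the set where distinct support points of $\mu$ project to the same value; this set is $\sigma$-null since $\sigma$ is taken absolutely continuous with respect to $\mathcal{U}(\mathbb{S}^{d-1})$, so $\mathcal{D}_p(\mu,\mu)=0$ still follows.
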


\begin{proof}\, 
\begin{itemize}
    \item Symmetry: The way we constructed $\mathcal{D}_p(\cdot,\cdot)$ makes it so that  $\mathcal{D}_p(\mu^1,\mu^2)=\mathcal{D}_p(\mu^2,\mu^1)$.
    \item Positivity: It is clear that by definition $\mathcal{D}_p(\mu^1,\mu^2)\geq 0$.
    \item Identity of indiscernibles: 

    
    First, if $\mu^1=\mu^2=:\mu$, then  
    $\gamma_\theta^{\mu^1,\mu^2} = (id\times id)_{\#}\mu$ for all $\theta\in\bbS^{d-1}$. Hence $\bar{\gamma}^{\mu^1,\mu^2} = (id\times id)_{\#}\mu$ which implies $\mathcal{D}_p(\mu^1,\mu^2) = 0$.
    

    Secondly, if $\mu^1,\mu^2$ are such that $\mathcal{D}_p(\mu^1,\mu^2)=0$, by having 
    $$W_p(\mu^1,\mu^2)\leq \mathcal{D}_p(\mu^1,\mu^2)=0,$$
    we can use the fact that $W_p(\cdot,\cdot)$ satisfies the identity of indiscernibles by being a distance. That is,  
    $W_p(\mu^1,\mu^2)=0$ implies $\mu^1=\mu^2$.
    \item Triangle inequality: Given $\mu^1,\mu^2,\mu^3$ arbitrary finite discrete measures with arbitrary real weights, consider approximating sequences $(\mu^1_n)_{n\in\mathbb{N}}$, $(\mu^2_n)_{n\in\mathbb{N}}$, $(\mu^3_n)_{n\in\mathbb{N}}$ in $\mathcal{P}_\mathbb{Q}(\mathbb R^d)$ as before. Notice that every subsequence of 
    $(\mu^1_n)_{n\in\mathbb{N}}$ (respectively of $(\mu^2_n)_{n\in\mathbb{N}}$ and $(\mu^3_n)_{n\in\mathbb{N}}$) will converge to $\mu^1$ (respectively, to $\mu^2$ and $\mu^3$), as every subsequence of a convergent sequence is convergent. 
    
    By Proposition \ref{prop: rational metric}, we have that, for each $n\in \mathbb R^d $,
    $$\mathcal{D}_p(\mu^1_n,\mu^2_n)\leq\mathcal{D}_p(\mu^1_n,\mu^3_n)+\mathcal{D}_p(\mu^3_n,\mu^2_n).$$
    Taking the limit as $n\to\infty$, from Lemma \eqref{lemma: conv D discrete} we obtain 
\[ \mathcal{D}_p(\mu^1,\mu^2)\leq\mathcal{D}_p(\mu^1,\mu^3)+\mathcal{D}_p(\mu^3,\mu^2). \qedhere \]
\end{itemize}
\end{proof}

\subsection{Discrete measures with countable support}

\begin{lemma}\label{lem:no_overlap}
Let $\mu=\sum_{m\in\mathbb N}^\infty p^m\delta_{x_m}$ be a discrete probability measure with countable support $\{x_m\}_{m\in\mathbb N}$.
Let $\sigma$ be an absolutely continuous probability measure with respect to the Lebesgue measure on the sphere (we write, $\sigma\ll\Unif(\mathbb{S}^{d-1})$). 
Let $$S_\mu:=\{\theta\in \mathbb{S}^{d-1}: \,  \theta\cdot x_m=\theta \cdot x_{m'} \text{ for some pair }(m,m') \text{ with } m\not=m'\}.$$
Then $$\sigma(S_\mu)=0.$$
\end{lemma}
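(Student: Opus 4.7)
The plan is to write $S_\mu$ as a countable union of exceptional sets, each of which is null for the uniform measure on the sphere, and then invoke the absolute continuity hypothesis $\sigma \ll \Unif(\mathbb{S}^{d-1})$ to transfer the conclusion. The only substantive ingredient needed is the standard fact that any proper great subsphere of $\mathbb{S}^{d-1}$ carries zero surface measure.

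First, I would index the ``bad'' directions pair by pair. For each ordered pair of distinct indices $(m,m') \in \mathbb{N}\times \mathbb{N}$ with $m \neq m'$, set
\[ S_{m,m'} := \{\theta \in \mathbb{S}^{d-1} : \theta\cdot x_m = \theta\cdot x_{m'}\} = \{\theta \in \mathbb{S}^{d-1} : \theta \cdot (x_m - x_{m'}) = 0\}. \]
Since $\{x_m\}_{m\in\mathbb{N}}$ enumerates the support of $\mu$, the points $x_m$ are pairwise distinct, so $v_{m,m'} := x_m - x_{m'}$ is a nonzero vector in $\mathbb{R}^d$. Hence $S_{m,m'}$ is the intersection of $\mathbb{S}^{d-1}$ with the linear hyperplane $v_{m,m'}^\perp$, i.e., an equatorial $(d-2)$-subsphere of $\mathbb{S}^{d-1}$. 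By rotational invariance of $\Unif$ combined with a dimension count (or a direct computation in polar coordinates), this gives $\Unif(S_{m,m'}) = 0$.

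Next, observe that by construction
\[ S_\mu = \bigcup_{\substack{m,m'\in\mathbb{N}\\ m\neq m'}} S_{m,m'}, \]
which is a countable union. Countable subadditivity of $\Unif$ then yields $\Unif(S_\mu)=0$, and the absolute continuity $\sigma \ll \Unif(\mathbb{S}^{d-1})$ gives the desired conclusion $\sigma(S_\mu)=0$.

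There is essentially no serious obstacle: the argument is a straightforward countable-additivity computation. The only point requiring a small bit of care is to be sure that the $x_m$ are truly pairwise distinct (so that $v_{m,m'} \neq 0$ and $S_{m,m'}$ is a proper subsphere), which is guaranteed by the assumption that $\{x_m\}_{m\in\mathbb{N}}$ indexes the support of $\mu$.
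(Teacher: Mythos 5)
Your proof is correct and takes essentially the same route as the paper: both decompose $S_\mu$ into the countable union of the sets $\{\theta : \theta\cdot(x_m - x_{m'}) = 0\}$, observe that each is contained in a $(d-2)$-dimensional subsphere and hence is null, and conclude by countable subadditivity together with $\sigma \ll \Unif(\mathbb{S}^{d-1})$. The only (immaterial) difference is that you apply subadditivity to $\Unif$ first and invoke absolute continuity at the end, whereas the paper transfers to $\sigma$ pairwise before summing.
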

\begin{proof}
First, consider distinct points $x_m,x_{m'}$ on the support of $\mu$, and let $$S(x_m,x_{m'})=\{\theta\in \mathbb{S}^{d-1}: \, \theta \cdot x_m=\theta \cdot x_{m'}\}.$$
It is straightforward to verify that
$$S(x_m,x_{m'})=\mathbb{S}^{d-1}\cap \text{span}(\{x_m-x_{m'}\})^\perp,$$
where $\text{span}(\{x_m-x_{m'}\})^\perp$ is the orthogonal subspace to the line in the direction of the vector $(x_m-x_{m'})$. Thus, $S(x_m,x_{m'})$ is a subset of a $d-2$-dimensional sub-sphere in $\mathbb{S}^{d-1}$, and therefore 
\begin{align}
  \sigma_{\mathbb{S}^{d-1}}(S(x_{m},x_{m'}) )=0\label{pf:simga_ S_ij}  
\end{align}
Since, 
\begin{align}
S_\mu = \bigcup_{(x_m,x_{m'})\in \textbf{M}} S(x_m,x_{m'}), \qquad \text{ where } \quad \textbf{M}=\{(x_m,x_{m'}):\, m\neq m'\} 
\end{align}
we have that 
\begin{align}
\sigma(S_\mu)\leq \sum_{(x_i,x_i')\in \textbf{M}} \sigma(S(x_m,x_{m'}))=0 \nonumber. 
\end{align}
since $\textbf{M}$ is countable
(indeed, $|\textbf{M}|\leq |{\supp}(\mu)\times {\supp}(\mu)|$).
\end{proof}

\begin{remark}\label{rem: well defined countable support}[$\bar{\gamma}^{\mu^1,\mu^2}$ is well-defined for discrete measures with countable support] Let $\sigma \ll \Unif(\mathbb S^{d-1})$.
Given two discrete probability measures $\mu^1=\sum p(x)\delta_x$ and $\mu^2=\sum q(y)\delta_y$ with countable support, from Lemma \ref{lem:no_overlap}, we have that $\sigma(S_{\mu^i})=0$ for $i=1,2$, and so 
$\sigma(S_{\mu^1}\cup S_{\mu^2})=0$.
Therefore, similarly to the case of discrete measures with finite support, given any  $x\in \supp(\mu^1)$, $y\in \supp(\mu^2)$ we have that the map $\theta\mapsto \frac{p(x)q(y)}{P(\bar x^\theta)Q(\bar y^\theta)}$ from $\mathbb S^{d-1}$ to $\mathbb R$ is equal to the constant function $\theta\mapsto 1$ up to a set of $\sigma$-measure $0$. This implies that the function $\theta\mapsto u_\theta^{\mu^1,\mu^2}$ is measurable. Finally, since $|\gamma_\theta^{\mu^1,\mu^1}(\{(x,y)\})|\leq 1$ for every $(x,y)$, we have that $\bar{\gamma}^{\mu^1,\mu^2}$ is well-defined. 
\end{remark}

\section{Equivalence with \texorpdfstring{Weak$^*$}{Weak*} Convergence}\label{app: weak*}


\begin{lemma}\label{lem: weak con for projection}
    Let $\Omega\subset \mathbb{R}$ be a compact set, $\mu\in \mathcal{P}(\Omega)$ and consider a sequence of probability measures $(\mu_n)_{n\in \mathbb N}$ defined in $\Omega$ such that $\mu_n\weakstarto\mu$ as $n\to\infty$. Then, for each $\theta\in\mathbb S^{d-1}$, we have that $\theta_\#\mu_n\weakstarto\theta_\#\mu$ as $n\to\infty$. 
\end{lemma}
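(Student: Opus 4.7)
The plan is to unfold the definition of the pushforward measure and use the change-of-variables formula to reduce weak$^*$ convergence of the projected measures to the assumed weak$^*$ convergence of the original measures, tested against a specific family of continuous bounded functions.

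More concretely, I would first fix an arbitrary continuous and bounded test function $\phi \in C_b(\mathbb{R})$ (or $\phi \in C(\theta\cdot\Omega)$, using that $\theta\cdot\Omega$ is compact as the continuous image of the compact set $\Omega$). By the definition of the pushforward and the change-of-variables formula, one has
\begin{equation*}
    \int_{\mathbb{R}} \phi(u)\,d(\theta_\#\mu_n)(u) = \int_{\Omega} \phi(\theta\cdot x)\,d\mu_n(x),
\end{equation*}
and analogously with $\mu$ in place of $\mu_n$. Thus the problem reduces to showing that the right-hand side converges as $n\to \infty$.

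The key observation is that the composition $\tilde\phi(x) := \phi(\theta\cdot x)$ is itself a continuous and bounded function on $\Omega$: continuity follows because $x\mapsto \theta\cdot x$ is linear (hence continuous) and $\phi$ is continuous, while boundedness is inherited from $\phi$. Therefore $\tilde\phi$ is a legitimate test function for the weak$^*$ convergence $\mu_n \weakstarto \mu$, which yields
\begin{equation*}
    \int_{\Omega} \phi(\theta\cdot x)\,d\mu_n(x) \longrightarrow \int_{\Omega} \phi(\theta\cdot x)\,d\mu(x) = \int_{\mathbb{R}}\phi(u)\,d(\theta_\#\mu)(u).
\end{equation*}
Since $\phi$ was arbitrary, this is exactly the statement $\theta_\#\mu_n \weakstarto \theta_\#\mu$.

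There is no real obstacle here — the argument is essentially a one-line consequence of continuity of the linear projection $x\mapsto \theta\cdot x$ combined with the definition of weak$^*$ convergence. The only minor subtlety to flag is the domain where the test functions live: compactness of $\Omega$ ensures that the pushforward $\theta_\#\mu_n$ is supported in the compact interval $\theta\cdot\Omega\subset\mathbb{R}$, so there is no difference between testing against $C_b(\mathbb{R})$ and $C(\theta\cdot\Omega)$, and no issues of tightness arise.
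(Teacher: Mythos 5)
Your argument is exactly the one the paper uses: both proofs apply the change-of-variables formula for the pushforward, note that $x\mapsto \varphi(\theta\cdot x)$ is a continuous (bounded, by compactness of $\Omega$) test function, and invoke the assumed weak$^*$ convergence of $(\mu_n)$. The proposal is correct and takes essentially the same approach as the paper.
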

\begin{proof} 
Given $\theta\in\mathbb S^{d-1}$, notice that $\{\theta\cdot x: \, x\in \Omega\}$ is a $1$-dimensional compact set, which contains the supports of $\theta_\#\mu$ and $(\theta_\#\mu_n)_{n\in\mathbb N}$. Thus, when dealing with the weak$^*$-topology we can use continuous functions as test functions. 
Let $\varphi:\mathbb R\to \mathbb R$ be a continuous test function, then 
    \begin{align*}
        \int_{\mathbb R} \varphi(u)d\theta_\#\mu_n(u)=\int_{\mathbb R^d}\varphi(\theta\cdot x)d\mu_n(x)\underset{n\to\infty}{\longrightarrow} \int_{\mathbb R^d}\varphi(\theta\cdot x)d\mu(x)= \int_{\mathbb R} \varphi(u)d\theta_\#\mu(u)
    \end{align*}
    since the composition $x\mapsto \theta \cdot x\mapsto \varphi(\theta \cdot x)$ is a continuous function from $\mathbb R^d$ to $\mathbb R$.
\end{proof}

\begin{lemma}\label{lem: lambda theta weak conv}
    Let $\Omega\subset \mathbb{R}$ be a compact set, $\mu^i\in \mathcal{P}(\Omega)$, $i=1,2$, and consider  sequences of probability measures $(\mu_n^i)_{n\in \mathbb N}$ defined in $\Omega$ such that $\mu_n^i\weakstarto\mu^i $ as $n\to\infty$, for $i=1,2$. Given $\theta\in\mathbb S^{d-1}$, consider $\Lambda_\theta^{\mu^1,\mu^2}$ the unique optimal transport plan between  $\theta_\#\mu^1$ and $\theta_\#\mu^2$, and for each $n\in\mathbb{N}$, consider $\Lambda_\theta^{\mu_n^1,\mu_n^2}$ the unique optimal transport plan between  $\theta_\#\mu^1_n$ and $\theta_\#\mu^2_n$. Then 
    $\Lambda_\theta^{\mu_n^1,\mu_n^2}\weakstarto\Lambda_\theta^{\mu^1,\mu^2}$.
\end{lemma}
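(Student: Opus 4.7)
The plan is to reduce the lemma to the already-established compactness/stability result for optimal transport plans under weak$^*$ convergence of marginals, namely Proposition \ref{prop: convergence of plan}, combined with the uniqueness of the optimal plan in one dimension.

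First, I would invoke Lemma \ref{lem: weak con for projection} on each factor to obtain $\theta_\#\mu_n^i \weakstarto \theta_\#\mu^i$ for $i=1,2$. Since $\Omega\subset\mathbb{R}^d$ is compact, the projected set $\theta\cdot\Omega\subset\mathbb{R}$ is also compact, so all the measures $\theta_\#\mu^i$ and $\theta_\#\mu_n^i$ are supported in a common compact subset of $\mathbb{R}$, and therefore the product measures $\Lambda_\theta^{\mu_n^1,\mu_n^2}$ live in $\mathcal{P}(K\times K)$ for a compact $K\subset\mathbb{R}$.

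Next, I would apply Proposition \ref{prop: convergence of plan} (which holds verbatim with $d=1$) to the sequence of optimal transport plans $(\Lambda_\theta^{\mu_n^1,\mu_n^2})_{n\in\mathbb{N}}$. This yields a subsequence $(\Lambda_\theta^{\mu_{n_k}^1,\mu_{n_k}^2})_{k\in\mathbb{N}}$ that converges in the weak$^*$-topology to some $\Lambda\in\Gamma^*(\theta_\#\mu^1,\theta_\#\mu^2)$, i.e., to an optimal transport plan between the limiting projected marginals. The crucial observation is that the OT problem \eqref{eq:OT_theta} is one-dimensional, so as recalled after \eqref{eq:OT_theta} the optimal plan is unique. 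Hence $\Lambda = \Lambda_\theta^{\mu^1,\mu^2}$.

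Finally, I would upgrade subsequential convergence to convergence of the whole sequence by the standard argument: if $(\Lambda_\theta^{\mu_n^1,\mu_n^2})_{n\in\mathbb{N}}$ did not converge weak$^*$ to $\Lambda_\theta^{\mu^1,\mu^2}$, there would exist a continuous test function $\varphi\in C(K\times K)$, an $\varepsilon>0$, and a subsequence along which the integrals $\int \varphi \, d\Lambda_\theta^{\mu_{n_j}^1,\mu_{n_j}^2}$ stay at least $\varepsilon$ away from $\int \varphi \, d\Lambda_\theta^{\mu^1,\mu^2}$. Applying Proposition \ref{prop: convergence of plan} again to this bad subsequence would produce a further sub-subsequence converging weak$^*$ to the same unique limit $\Lambda_\theta^{\mu^1,\mu^2}$, contradicting the separation by $\varepsilon$.

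The only nontrivial step here is the application of Proposition \ref{prop: convergence of plan}, but its hypotheses are directly met: compact support of all measures in a common compact set, and weak$^*$ convergence of the marginals established in the first step. Uniqueness of the 1D optimal plan, together with the subsequence-of-subsequence trick, then delivers the convergence of the full sequence with no further work.
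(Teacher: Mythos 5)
Your proposal is correct and follows essentially the same route as the paper's proof: project the marginals via Lemma \ref{lem: weak con for projection}, extract a weak$^*$-convergent subsequence of optimal plans via Proposition \ref{prop: convergence of plan}, identify the limit using uniqueness of the one-dimensional optimal plan, and upgrade to full-sequence convergence by the subsequence-of-subsequence contradiction argument. No gaps.
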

\begin{proof}
    The proof is similar to that of Lemma \ref{lemma: conv Lambda discrete}. From Lemma \ref{lem: weak con for projection}, Proposition \ref{prop: convergence of plan}, and uniqueness of optimal plans in one-dimension, there exists a subsequence $(\Lambda_{\theta}^{\mu^1_{n_k},\mu^2_{n_k}})_{k\in\mathbb N}$ such that
    \begin{equation*}
        \Lambda_{\theta}^{\mu^1_{n_k},\mu^2_{n_k}}\rightharpoonup^* \Lambda_\theta^{\mu^1,\mu^2}.
    \end{equation*}
    Now, suppose that the original sequence $(\Lambda_{\theta}^{\mu^1_n,\mu^2_n})_{n\in\mathbb N}$ does not converge to $\Lambda_\theta^{\mu^1,\mu^2}$ (in the weak$^*$-topology). Thus, there exists a continuous function $\varphi:\mathbb R\to\mathbb R$ such that for a given $\varepsilon>0$ there exists a subsequence $(\Lambda_{\theta}^{\mu^1_{n_j},\mu^2_{n_j}})_{j\in\mathbb N}$ with
    \begin{equation}\label{eq: not conv2}
      \left|\int_{\mathbb R} \varphi(u) \, d\Lambda_{\theta}^{\mu^1_{n_j},\mu^2_{n_j}}(u)-\int_{\mathbb R} \varphi(u) \, d\Lambda_\theta^{\mu^1,\mu^2}(u)\right|>\varepsilon  
    \end{equation}
    But again, from Lemma \ref{lem: weak con for projection}, Proposition \ref{prop: convergence of plan}, and uniqueness of optimal plans in one-dimension, we have that there exists a sub-subsequence such that
    \begin{equation*}
     \int_{\mathbb R} \varphi(u) \, d\Lambda_{\theta}^{\mu^1_{n_{j_i}},\mu^2_{n_{j_i}}}(u)\underset{i\to\infty}{\longrightarrow}  \int_{\mathbb R} \varphi(u) \, d\Lambda_\theta^{\mu^1,\mu^2}(u),
    \end{equation*}
    contradicting \eqref{eq: not conv2}. Since the contradiction is achieved from assuming that  the whole sequence $(\Lambda_{\theta}^{\mu^1_n,\mu^2_n})_{n\in\mathbb N}$ does not converge to $\Lambda_\theta^{\mu^1,\mu^2}$, we have that, in fact, it does converge to $\Lambda_\theta^{\mu^1,\mu^2}$ in the weak$^*$-topology.  
\end{proof}

\begin{lemma}\label{lem:B(gamma)_converge}
Consider probability measures supported in a compact set $\Omega\subset\mathbb R^d$ such that $\mu^1_n\weakstarto\mu^1, \mu^2_n\weakstarto \mu^2$. Let $\theta\in \mathbb{S}^{d-1}$, then 
the sequence of lifted plans $\{\gamma_{\theta}^{\mu^1_n,\mu^2_n}\}_{n\in\mathbb N}$ satisfies that there exists a subsequence such that
$\gamma_{\theta}^{\mu^1_{n_k},\mu^2_{n_k}}\weakstarto \gamma^*$ for $$\gamma^*\in\Gamma(\mu^1,\mu^2;\Lambda_\theta^{\mu^1,\mu^2}):=\{\gamma\in\Gamma(\mu^1,\mu^2): \, (\theta\times\theta)_\#\gamma=\Lambda_\theta^{\mu^1,\mu^2}\},$$
where $\theta\times\theta(x,y):=(\theta\cdot x,\theta \cdot y)$ for all $(x,y)\in\mathbb R^d\times \mathbb{R}^d$.
\end{lemma}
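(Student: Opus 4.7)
The plan is to proceed in three steps following the strategy of Proposition~\ref{prop: convergence of plan}: first extract a weak$^*$ convergent subsequence, then identify the marginals of the limit, and finally identify its image under $(\theta\times\theta)$.

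First, since all measures $\gamma_\theta^{\mu^1_n,\mu^2_n}$ are supported in the compact set $\Omega\times\Omega$ and have unit mass, Banach--Alaoglu (or equivalently Prokhorov, since the tightness is automatic on a compact set) yields a subsequence $(\gamma_\theta^{\mu^1_{n_k},\mu^2_{n_k}})_{k\in\mathbb{N}}$ and a probability measure $\gamma^*\in\mathcal{P}(\Omega\times\Omega)$ such that $\gamma_\theta^{\mu^1_{n_k},\mu^2_{n_k}}\weakstarto \gamma^*$.

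Second, I would check that $\gamma^*\in\Gamma(\mu^1,\mu^2)$ by testing against continuous functions depending only on the first (resp.\ second) variable, exactly as done in the proof of Proposition~\ref{prop: convergence of plan}: for any $\phi\in C(\Omega)$,
\begin{equation*}
\int_{\Omega\times\Omega}\phi(x)\,d\gamma^*(x,y)=\lim_{k\to\infty}\int_{\Omega\times\Omega}\phi(x)\,d\gamma_\theta^{\mu^1_{n_k},\mu^2_{n_k}}(x,y)=\lim_{k\to\infty}\int_{\Omega}\phi(x)\,d\mu^1_{n_k}(x)=\int_\Omega\phi(x)\,d\mu^1(x),
\end{equation*}
using Lemma~\ref{lem: gamma_theta discrete} (which gives that $\gamma_\theta^{\mu^1_n,\mu^2_n}$ has first marginal $\mu^1_n$) and the hypothesis $\mu^1_n\weakstarto \mu^1$. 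The analogous computation yields that the second marginal of $\gamma^*$ is $\mu^2$.

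Third, and this is the key new point compared to Proposition~\ref{prop: convergence of plan}, I would verify that $(\theta\times\theta)_\#\gamma^*=\Lambda_\theta^{\mu^1,\mu^2}$. A direct computation from the definition in \eqref{eq: gamma theta for general discrete} shows that, by construction, $(\theta\times\theta)_\#\gamma_\theta^{\mu^1_n,\mu^2_n}=\Lambda_\theta^{\mu^1_n,\mu^2_n}$ for every $n$; indeed, grouping points in the same equivalence classes,
\begin{equation*}
\sum_{\substack{x:\,\bar x^\theta=\bar x^\theta_0\\ y:\,\bar y^\theta=\bar y^\theta_0}} u_\theta^{\mu^1_n,\mu^2_n}(x,y)=\frac{P_n(\bar x^\theta_0)Q_n(\bar y^\theta_0)}{P_n(\bar x^\theta_0)Q_n(\bar y^\theta_0)}\Lambda_\theta^{\mu^1_n,\mu^2_n}(\{(\bar x^\theta_0,\bar y^\theta_0)\})=\Lambda_\theta^{\mu^1_n,\mu^2_n}(\{(\bar x^\theta_0,\bar y^\theta_0)\}).
\end{equation*}
Since $(x,y)\mapsto(\theta\cdot x,\theta\cdot y)$ is continuous and weak$^*$ convergence is preserved under pushforward by continuous maps, testing against any $\psi\in C(\mathbb{R}\times\mathbb{R})$ gives
\begin{equation*}
\int\psi\,d(\theta\times\theta)_\#\gamma^*=\lim_k\int\psi(\theta\cdot x,\theta\cdot y)\,d\gamma_\theta^{\mu^1_{n_k},\mu^2_{n_k}}=\lim_k\int\psi\,d\Lambda_\theta^{\mu^1_{n_k},\mu^2_{n_k}}=\int\psi\,d\Lambda_\theta^{\mu^1,\mu^2},
\end{equation*}
where the last equality is Lemma~\ref{lem: lambda theta weak conv}. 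This identifies $(\theta\times\theta)_\#\gamma^*=\Lambda_\theta^{\mu^1,\mu^2}$, placing $\gamma^*$ in $\Gamma(\mu^1,\mu^2;\Lambda_\theta^{\mu^1,\mu^2})$.

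The main obstacle I anticipate is the third step: one must keep careful track of the weights in the definition of $u_\theta^{\mu^1_n,\mu^2_n}$ to confirm the pushforward identity at the level of the approximating sequence, after which the transition to the limit is routine weak$^*$ calculus. Uniqueness of the limit is \emph{not} claimed here (the set $\Gamma(\mu^1,\mu^2;\Lambda_\theta^{\mu^1,\mu^2})$ can contain more than one element when the original measures are not discrete), which is why the statement only produces a convergent subsequence rather than convergence of the full sequence.
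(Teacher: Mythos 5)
Your proposal is correct and follows essentially the same route as the paper's proof: Banach--Alaoglu for the subsequence, marginal identification as in Proposition~\ref{prop: convergence of plan}, and then the pushforward identity $(\theta\times\theta)_\#\gamma_\theta^{\mu^1_n,\mu^2_n}=\Lambda_\theta^{\mu^1_n,\mu^2_n}$ combined with Lemma~\ref{lem: lambda theta weak conv} and uniqueness of weak$^*$ limits. The only difference is that you spell out the weight computation verifying the pushforward identity, which the paper simply asserts ``follows by definition of the lifted plans.''
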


\begin{proof}

Since $\gamma_\theta^{\mu^1_n,\mu^2_n}\in\Gamma(\mu^1_n,\mu^2_n)$, similar to the proof of Proposition \ref{prop: convergence of plan}, by Banach-Alaoglu Theorem, there exists a subsequence $\gamma_\theta^{\mu^1_{n_k},\mu^2_{n_k}}$, such that $\gamma_\theta^{\mu^1_{n_k},\mu^2_{n_k}}\weakstarto \gamma^*,$
for some $\gamma^*\in\mathcal{P}(\Omega\times \Omega)$.
Again, as in the proof of Proposition \ref{prop: convergence of plan}, it can be shown that 
$\gamma^*\in\Gamma(\mu^1,\mu^2)$. 

In addition, we have 
$$(\theta\times\theta)_\#\gamma_\theta^{\mu^1_{n_k},\mu^2_{n_k}}\weakstarto(\theta\times\theta)_\#\gamma^*\qquad \text{and} \qquad (\theta\times\theta)_\#\gamma_\theta^{\mu^1_{n_k},\mu^2_{n_k}}=\Lambda_\theta^{\mu^1_{n_k},\mu^2_{n_k}}\weakstarto \Lambda_\theta^{\mu^1,\mu^2}$$
(where the first one follows by similar arguments to those in Lemma \ref{lem: weak con for projection} and the second one follows by definition of the lifted plans).
By the uniqueness of the limit (for the weak$^*$-convergence), we have $(\theta\times\theta)_\#\gamma^*=\Lambda_\theta^{\mu^1,\mu^2}$. Thus, $\gamma^*\in\Gamma(\mu^1,\mu^2;\Lambda_\theta^{\mu^1,\mu^2})$. 
\end{proof}
\begin{theorem}
\label{prop:B_SlicedDtoD}
Let $\sigma\ll Unif(\mathbb S^{d-1})$. Consider discrete probability measures measures $\mu^1=\sum_{i=1}^\infty p_i\delta_{x_i},\mu^2=\sum_{j=1}^\infty q_j\delta_{y_j}$ in $\Omega$ supported on a compact set $\Omega\subset\mathbb R^d$.
Consider sequences $(\mu^1_n)_{n\in\mathbb N}$, $(\mu^2_n)_{n\in\mathbb N}$ of discrete probability measures defined on $\Omega$ such that $\mu^1_n\rightharpoonup^* \mu^1, \mu^2_n\rightharpoonup^* \mu^2$. Then  $\sigma$-a.s. we have that $\mathcal{D}_p(\mu^1_n,\mu^2_n;\theta)\to \mathcal{D}_p(\mu^1,\mu^2;\theta)$ as $n\to \infty$.
Moreover,  $\mathcal{D}_p(\mu^1_n,\mu^2_n)\to \mathcal{D}_p(\mu^1,\mu^2)$ as $n\to\infty$. 
\end{theorem}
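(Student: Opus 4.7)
The plan is to prove the slicewise convergence $\mathcal{D}_p(\mu^1_n,\mu^2_n;\theta)\to \mathcal{D}_p(\mu^1,\mu^2;\theta)$ for $\sigma$-almost every $\theta$ first, and then deduce the integrated convergence $\mathcal{D}_p(\mu^1_n,\mu^2_n)\to\mathcal{D}_p(\mu^1,\mu^2)$ via the Dominated Convergence Theorem on $\mathbb S^{d-1}$. The slicewise claim in turn will reduce to the weak$^*$ convergence $\gamma_\theta^{\mu^1_n,\mu^2_n}\weakstarto \gamma_\theta^{\mu^1,\mu^2}$, since the cost $(x,y)\mapsto \|x-y\|^p$ is bounded and continuous on the compact set $\Omega\times\Omega$.

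First, I would restrict to a good set of directions. Applying Lemma~\ref{lem:no_overlap} to $\mu^1$ and $\mu^2$, the set $S:=S_{\mu^1}\cup S_{\mu^2}$ is $\sigma$-negligible, and for every $\theta\in\mathbb S^{d-1}\setminus S$ the countable supports of $\mu^1$ and $\mu^2$ have pairwise distinct $\theta$-projections. Consequently the classes $\bar x_i^\theta,\bar y_j^\theta$ are singletons, $P(\bar x_i^\theta)=p_i$ and $Q(\bar y_j^\theta)=q_j$, and the defining formula \eqref{eq: gamma theta for general discrete} collapses to $\gamma_\theta^{\mu^1,\mu^2}(\{(x_i,y_j)\})=\Lambda_\theta^{\mu^1,\mu^2}(\{(\theta\cdot x_i,\theta\cdot y_j)\})$. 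The key observation is that under this distinct-projection property the lift of $\Lambda_\theta^{\mu^1,\mu^2}$ to $\Gamma(\mu^1,\mu^2)$ is forced to be unique: any $\gamma^*\in \Gamma(\mu^1,\mu^2;\Lambda_\theta^{\mu^1,\mu^2})$ is automatically supported on $\{x_i\}\times\{y_j\}$, and the pushforward identity $(\theta\times\theta)_\#\gamma^*=\Lambda_\theta^{\mu^1,\mu^2}$ pins down each atomic weight $\gamma^*(\{(x_i,y_j)\})=\Lambda_\theta^{\mu^1,\mu^2}(\{(\theta\cdot x_i,\theta\cdot y_j)\})$, which agrees with the value of $\gamma_\theta^{\mu^1,\mu^2}$.

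With this uniqueness in hand, fix $\theta\notin S$. By Lemma~\ref{lem:B(gamma)_converge}, every subsequence of $\{\gamma_\theta^{\mu^1_n,\mu^2_n}\}_n$ admits a further weak$^*$-convergent subsequence whose limit lies in $\Gamma(\mu^1,\mu^2;\Lambda_\theta^{\mu^1,\mu^2})$; by the previous paragraph that limit must equal $\gamma_\theta^{\mu^1,\mu^2}$. A standard subsubsequence principle (identical to the one used in the proof of Lemma~\ref{lem: lambda theta weak conv}) then upgrades this to weak$^*$ convergence of the full sequence, and integrating the bounded continuous cost $\|x-y\|^p$ against it gives the slicewise convergence in view of \eqref{eq:D_theta}. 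Finally, since each $\gamma_\theta^{\mu^1_n,\mu^2_n}$ is a probability measure supported in $\Omega\times\Omega$, one has $\mathcal{D}_p(\mu^1_n,\mu^2_n;\theta)^p\leq \mathrm{diam}(\Omega)^p$ uniformly in $n$ and $\theta$, so Dominated Convergence applied to $\int_{\mathbb S^{d-1}} \mathcal{D}_p(\mu^1_n,\mu^2_n;\theta)^p\,d\sigma(\theta)$ yields the integrated conclusion.

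The main obstacle I anticipate is the uniqueness-of-lift step: in general the lift of a one-dimensional plan to $\Gamma(\mu^1,\mu^2)$ is far from unique because several atoms of $\mu^1$ or $\mu^2$ may coalesce under projection, and it is essential that the hypothesis $\sigma\ll \Unif(\mathbb S^{d-1})$ together with the countability of $\supp(\mu^1)\cup\supp(\mu^2)$ forces the set of ambiguous directions to be $\sigma$-null via Lemma~\ref{lem:no_overlap}. This is what allows the abstract compactness argument of Lemma~\ref{lem:B(gamma)_converge} to identify the weak$^*$ limit as exactly $\gamma_\theta^{\mu^1,\mu^2}$ rather than some other element of $\Gamma(\mu^1,\mu^2;\Lambda_\theta^{\mu^1,\mu^2})$.
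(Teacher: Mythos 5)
Your proposal is correct and follows essentially the same route as the paper's proof: restrict to the $\sigma$-full set of directions where projections of the supports are injective (Lemma~\ref{lem:no_overlap}), observe that there the lift of $\Lambda_\theta^{\mu^1,\mu^2}$ is unique so that Lemma~\ref{lem:B(gamma)_converge} plus a subsubsequence argument identifies the weak$^*$ limit as $\gamma_\theta^{\mu^1,\mu^2}$, and finish with dominated convergence over $\mathbb S^{d-1}$. If anything, you make explicit the step the paper leaves implicit, namely why $\mathbb S^{d-1}\setminus S(\mu^1,\mu^2)\subseteq S_{\mu^1}\cup S_{\mu^2}$, i.e.\ why injectivity of the projections forces the atomic weights of any element of $\Gamma(\mu^1,\mu^2;\Lambda_\theta^{\mu^1,\mu^2})$ to coincide with those of $\gamma_\theta^{\mu^1,\mu^2}$.
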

\begin{proof}
Let us define the set
$$S(\mu^1,\mu^2):=\left\{\theta\in\mathbb{R}^d: \,  \Gamma(\mu^1,\mu^2;\Lambda_\theta^{\mu^1,\mu^2})= \{\gamma_\theta^{\mu^1,\mu^2}\}\right\}.$$

Since we are considering discrete measures, notice that $$\mathbb{S}^{d-1}\setminus S(\mu^1,\mu^2)\subseteq S_{\mu^1}\cup S_{\mu^2},$$
where $S_{\mu^1}=\{\theta\in \mathbb{S}^{d-1}: \, \theta\cdot x_m=\theta\cdot x_{m'} \text{ for some pair }m\neq m' \}$ and $S_{\mu^2}=\{\theta\in \mathbb{S}^{d-1}: \, \theta\cdot y_n=\theta\cdot y_{n'} \text{ for some pair }n\neq n' \}$.  

By Lemma \ref{lem:no_overlap}, we have $\sigma(\mathbb{S}^{d-1}\setminus S(\mu^1,\mu^2))\leq \sigma(S_{\mu^1}\cup S_{\mu^2})=0$.
Thus, 
$$\sigma(S(\mu^1,\mu^2))=1.$$

Let $\theta\in S(\mu^1,\mu^2)$, and consider the lifted plans $\gamma_\theta^{\mu^1,\mu^2}$ and $\gamma_\theta^{\mu^1_{n},\mu^2_{n}}$. By Lemma \ref{lem:B(gamma)_converge}, there exists a subsequence of $(\gamma_\theta^{\mu^1_{n},\mu^2_{n}})_{n\in\mathbb{N}}$ such that  $$\gamma_\theta^{\mu^1_{n_k},\mu^2_{n_k}}\weakstarto \gamma^*\in\Gamma(\mu^1,\mu^2;\Lambda_\theta^{\mu^1,\mu^2}).$$
Since $\theta\in S(\mu^1,\mu^2)$, we have that $\Gamma(\mu^1,\mu^2;\Lambda_\theta^{\mu^1,\mu^2})$ contains only one element, which is $\gamma_\theta^{\mu^1,\mu^2}$. Hence, $$\gamma^*=\gamma_\theta^{\mu^1,\mu^2}.$$

Moreover, by uniqueness of weak convergence,  proceeding similarly as in the proof of Lemma \ref{lem: lambda theta weak conv}, we have that the whole sequence $(\gamma_\theta^{\mu^1_{n},\mu^2_{n}})_{n\in\mathbb N}$ converges to $\gamma_\theta^{\mu^1,\mu^2}$ in the weak$^*$-topology. 

Therefore, by definition of weak$^*$-convergence for measures supported in a compact set (in our case, $\Omega\times \Omega\subset \mathbb R^d\times \mathbb R^d$), since $(x,y)\mapsto\|x-y\|^p$ is a continuous function, we have 
\begin{align}
\lim_{n\to\infty}\mathcal D_p(\mu^1_{n},\mu^2_{n};\theta)^p&=\lim_{n\to\infty} \int_{\Omega^2} \|x-y\|^p d\gamma^{\mu^1_{n},\mu^2_{n}}_\theta (x,y) \nonumber\\
&= \int_{\Omega^2} \|x-y\|^p d\gamma^{\mu^{1},\mu^{2}}_\theta(x,y)\nonumber\\
&=\mathcal{D}_p(\mu^{1},\mu^2;\theta)^p\nonumber
\end{align}


Combining this with the fact that $\sigma(S(\mu^1,\mu^2))=1$ and that $(\mathcal D_p(\mu^1_{n},\mu^2_{n};\theta)^p)_{n\in\mathbb N}$ is bounded, that is, $|\mathcal D_p(\mu^1_{n},\mu^2_{n};\theta)^p|\leq \max_{(x,y)\in \Omega\times \Omega}\|x-y\|^p$, by Dominated Lebesgue Theorem we obtain 
\begin{align*}
    \lim_{n\to\infty}\mathcal{D}_p(\mu^1_{n},\mu^2_{n})^p&=\lim_{n\to\infty}\int_{\mathbb S^{d-1}}\mathcal{D}_p(\mu^1_{n},\mu^2_{n};\theta)^pd\sigma(\theta)\notag\\
    &=\int_{\mathbb S^{d-1}}\lim_{n\to\infty}\mathcal{D}_p(\mu^1_{n},\mu^2_{n};\theta)^pd\sigma(\theta)\notag\\
    &=\int_{\mathbb S^{d-1}}\mathcal{D}_p(\mu^1,\mu^2;\theta)^pd\sigma(\theta)\notag\\
    &=\mathcal{D}_p(\mu^1,\mu^2)^p
\end{align*}

\end{proof}

\begin{corollary}
Let $\mu,\mu_n\in \cP(\Omega)$, where $\Omega\subset\bbR^d$ is compact, be of the form $\mu = \sum_{x\in\bbR^d} p(x) \delta_{x}$, $\mu_n = \sum_{x\in\bbR^d} p_n(x) \delta_{x}$ where $p(x)$ and $p_n(x)$ are 0 at all but countably many $x\in\bbR^d$.
Assume $\sigma\ll Unif(\mathbb S^{d-1})$. 
Then, $\cD_p(\mu_n,\mu)\to 0$ if and only if $\mu_n\weakstarto \mu$.
\end{corollary}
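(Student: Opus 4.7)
The plan is to handle the two directions separately, with each direction following quickly from a result already established in the paper.

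For the forward implication $\mathcal D_p(\mu_n,\mu)\to 0 \Rightarrow \mu_n\weakstarto\mu$: by Remark~\ref{remark: W<D}, since $\bar\gamma^{\mu_n,\mu}\in\Gamma(\mu_n,\mu)$ we have the pointwise bound $W_p(\mu_n,\mu)\leq \mathcal D_p(\mu_n,\mu)$. Hence $\mathcal D_p(\mu_n,\mu)\to 0$ forces $W_p(\mu_n,\mu)\to 0$, and convergence in $W_p$ on the compact set $\Omega$ is equivalent to weak$^*$ convergence (standard, e.g.\ \cite[Theorem 5.10]{santambrogio2015optimal}). This yields $\mu_n\weakstarto\mu$.

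For the reverse implication $\mu_n\weakstarto\mu \Rightarrow \mathcal D_p(\mu_n,\mu)\to 0$: I would apply Theorem~\ref{prop:B_SlicedDtoD} with the choices $\mu^1=\mu^2=\mu$ and $\mu^1_n=\mu_n$, together with the constant sequence $\mu^2_n=\mu$. The hypotheses of the theorem are satisfied because $\mu_n\weakstarto\mu$ by assumption and the constant sequence trivially converges to $\mu$. The conclusion of the theorem then gives
\begin{equation*}
\mathcal D_p(\mu_n,\mu)\;\longrightarrow\;\mathcal D_p(\mu,\mu).
\end{equation*}
It therefore remains only to verify that $\mathcal D_p(\mu,\mu)=0$. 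This was observed in the identity-of-indiscernibles step in the proof of Theorem~\ref{thm: metric discrete}: for any discrete measure $\mu$ and any $\theta\in\mathbb S^{d-1}$, $\Lambda_\theta^{\mu,\mu}=(\mathrm{id}\times\mathrm{id})_\#(\theta_\#\mu)$, so the construction of the lifted plan gives $\gamma_\theta^{\mu,\mu}=(\mathrm{id}\times\mathrm{id})_\#\mu$, and hence $\bar\gamma^{\mu,\mu}=(\mathrm{id}\times\mathrm{id})_\#\mu$ is supported on the diagonal. Thus $\sum_{x,y}\|x-y\|^p\bar\gamma^{\mu,\mu}(\{(x,y)\})=0$, which is $\mathcal D_p(\mu,\mu)^p=0$. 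I note that this diagonal-supported argument does not use finiteness of the support, so it covers the countable-support setting of the corollary.

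I do not foresee a significant obstacle: the forward direction is a one-line consequence of $W_p\leq \mathcal D_p$ plus the classical equivalence on compact domains, and the reverse direction is a direct specialization of Theorem~\ref{prop:B_SlicedDtoD}, which was already proved in the required generality (discrete measures with countable support in a compact set $\Omega$, and $\sigma\ll\mathrm{Unif}(\mathbb S^{d-1})$). The only subtlety worth stating explicitly is the $\mathcal D_p(\mu,\mu)=0$ computation, which I would include for completeness.
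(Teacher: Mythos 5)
Your proof is correct and follows essentially the same route as the paper: the forward direction via the bound $W_p\leq\mathcal D_p$ from Remark~\ref{remark: W<D}, and the converse as a specialization of Theorem~\ref{prop:B_SlicedDtoD}. Your explicit verification that $\mathcal D_p(\mu,\mu)=0$ (via the diagonal-supported plan $\bar\gamma^{\mu,\mu}=(\mathrm{id}\times\mathrm{id})_\#\mu$) is a detail the paper leaves implicit, and you are right that it carries over to countable support.
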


\begin{proof}
If $\cD_p(\mu_n,\mu)\underset{n\to\infty}{\longrightarrow} 0$ then, by Remark \ref{remark: W<D}, $W_p(\mu_n,\mu)\underset{n\to\infty}{\longrightarrow} 0$, hence $\mu_n\underset{n\to\infty}{\weakstarto}\mu$.

The converse is a Corollary of Theorem \ref{prop:B_SlicedDtoD}
\end{proof}

\end{document}